\newif\iftaclinstructions
\newcommand{\instr}
\algrenewcommand\algorithmicindent{1.0em}%
\newcommand{\rightcomment}[1]{{\color{gray} \(\triangleright\) {\footnotesize\textit{#1}}}}
\algrenewcommand{\algorithmiccomment}[1]{\hfill \rightcomment{#1}}  
\algnewcommand{\LineComment}[1]{\State \rightcomment{#1}}
\algnewcommand{\LinesComment}[1]{\State \rightcomment{\parbox[t]{\linewidth-\leftmargin-\widthof{\(\triangleright\) }}{#1}}}
\newcommand{\algorithmicfunc}[1]{\textbf{def} #1 :}
\newcommand{\checkNotation}[1]{{#1}}
\newcommand{\defn}[1]{\textbf{#1}}
\renewcommand{\th}{^{\text{th}}}
\renewcommand{\hat}[1]{\widehat{#1}}
\renewcommand{\bar}[1]{\overline{#1}}
\renewcommand{\setminus}{\smallsetminus}
\newcommand{\tree}{\checkNotation{d}}
\newcommand{\st}{\checkNotation{\mathcal{D}}}
\renewcommand{\root}{\checkNotation{\rho}}
\newcommand{\prob}{p}
\newcommand{\qprob}{q}
\newcommand{\R}{\mathbb{R}}
\newcommand{\lab}{\checkNotation{\mathcal{Y}}}
\newcommand{\edges}[0]{\checkNotation{\mathcal{E}}}
\newcommand{\nodes}[0]{\checkNotation{\mathcal{N}}}
\newcommand{\edge}[2]{\checkNotation{(#1\,{\rightarrow}\,#2)}}
\newcommand{\edgeij}[0]{\edge{i}{j}}
\newcommand{\edgekl}[0]{\edge{k}{l}}
\newcommand{\edgeijp}[0]{\edge{i'}{j'}}
\newcommand{\edgeklp}[0]{\edge{k'}{l'}}
\newcommand{\inc}[1]{\checkNotation{\mathcal{D}_{#1}}}
\newcommand{\col}[2]{\checkNotation{\mathcal{L}_{#1#2}}}
\newcommand{\lap}{\checkNotation{\mathbf{L}}}
\newcommand{\laphat}{\checkNotation{\mathbf{\hat{L}}}}
\newcommand{\lapelem}[1]{\checkNotation{\mathrm{L}_{#1}}}
\newcommand{\laphatelem}[1]{\checkNotation{\hat{\mathrm{L}}_{#1}}}
\newcommand{\cached}{\checkNotation{\mathbf{B}}}
\newcommand{\cachedelem}[1]{\checkNotation{\mathrm{B}_{#1}}}
\newcommand{\lapprime}[0]{\checkNotation{\mathrm{L}'}}
\newcommand{\lapp}{\checkNotation{\mathbf{L'}}}
\newcommand{\dlap}[2]{\checkNotation{\lapprime_{#1,#2}}}
\newcommand{\zerovector}{\boldsymbol 0}
\renewcommand{\det}[1]{\left\lvert #1 \right\rvert}
\newcommand{\treefunc}[2]{\checkNotation{#1(#2)}}
\newcommand{\edgefunc}[2]{\checkNotation{#1_{#2}}}
\newcommand{\total}[1]{\checkNotation{\bar{#1}}}
\newcommand{\totalw}[1]{\checkNotation{\widetilde{w_{#1}}}}
\newcommand{\expect}[2]{\checkNotation{\mathbb{E}_{#1}\!\left[ {#2} \right]}}
\newcommand{\wtree}[1]{\treefunc{w}{#1}}
\newcommand{\ptree}[1]{\treefunc{\prob}{#1}}
\newcommand{\qtree}[1]{\treefunc{\qprob}{#1}}
\newcommand{\rtree}[1]{\treefunc{r}{#1}}
\newcommand{\stree}[1]{\treefunc{s}{#1}}
\newcommand{\ttree}[1]{\treefunc{t}{#1}}
\renewcommand{\wedge}[1]{\edgefunc{w}{#1}}
\newcommand{\redge}[1]{\edgefunc{r}{#1}}
\newcommand{\sedge}[1]{\edgefunc{s}{#1}}
\newcommand{\wbar}{\checkNotation{\mathrm{Z}}}
\newcommand{\rbar}{\total{r}}
\newcommand{\sbar}{\total{s}}
\newcommand{\tbar}{\total{t}}
\newcommand{\funcSig}[3]{#1{:}\,\,#2 \mapsto #3}
\newcommand{\treefuncSig}[2]{\funcSig{#1}{\st}{\R^{#2}}}
\newcommand{\otherwbar}[1]{\wbar^{(#1)}}
\newcommand{\rhat}[1]{\checkNotation{\widehat{r_{#1}}}}
\newcommand{\shat}[1]{\checkNotation{\widehat{s_{#1}}}}
\newcommand{\defeq}[0]{\overset{\smaller\mathrm{def}}{=}}
\newcommand{\sumtree}{\sum_{\tree \in \st}}
\newcommand{\pluseq}{\ \texttt{+=}\ }
\newcommand{\bigo}[1]{\mathcal{O}\!\left(#1\right)}
\newcommand{\abs}[1]{\lvert #1 \rvert}
\newcommand{\runtime}[1]{\mathrm{Cost}\!\left\{ #1 \right\}}
\newcommand{\ent}{\mathrm{H}}
\newcommand{\nF}{\checkNotation{F}}
\newcommand{\nN}{\checkNotation{N}}
\newcommand{\nM}{\checkNotation{M}}
\newcommand{\nK}{\checkNotation{K}}
\newcommand{\nQ}{\checkNotation{Q}}
\newcommand{\nR}{\checkNotation{R}}
\newcommand{\nS}{\checkNotation{S}}
\newcommand{\nRs}{\checkNotation{R'}}
\newcommand{\nSs}{\checkNotation{S'}}
\newcommand{\nFs}{\checkNotation{F'}}
\newcommand{\plusequal}{{\,\textsf{+=}\,}}
\newcommand{\real}{\mathbb{R}}
\newcommand{\pwdot}[0]{\!\cdot\!}
\newcommand{\tin}[0]{\,{\in}\,}
\newcommand{\teq}[0]{\,{=}\,}
\newcommand{\monstersum}[1]{{\sum_{\mathclap{\substack{#1}}}}}  
\newcommand{\renyi}{R\'{e}nyi\xspace}
\newcommand{\algFace}[1]{\texttt{#1}}
\newcommand{\algCall}[2]{#1\!\left( #2 \right)}
\newcommand{\algMTT}[0]{\algFace{Z}}
\newcommand{\algFirst}[0]{\algFace{T}_1}
\newcommand{\algSecondVp}[0]{\algFace{T}_2^{\algFace{v}}}
\newcommand{\algSecondHes}[0]{\algFace{T}_2^\algFace{h}}
\newcommand{\algSecond}[0]{\algFace{T}_2}
\newcommand{\target}{\checkNotation{f^*}}
\newcommand{\onehot}[1]{\checkNotation{\overrightarrow{\boldsymbol{1}_{#1}}}}
\newcommand{\KL}{\mathrm{KL}}
\newcommand{\ar}{arborescence\xspace}
\newcommand{\ars}{arborescences\xspace}
\newcolumntype{C}{>{\centering\arraybackslash}X}
\newtheorem{thm}{Theorem}
\newtheorem{prop}{Proposition}
\theoremstyle{definition}
\crefname{section}{\S}{\S\S}
\Crefname{section}{\S}{\S\S}
\crefname{table}{Tab.}{}
\crefname{figure}{Fig.}{}
\crefname{algorithm}{Alg}{}
\crefname{algorithm}{Alg}{}
\crefname{line}{Line}{}
\crefname{appendix}{App.}{}
\crefname{thm}{Theorem}{}
\crefname{prop}{Proposition}{}
\crefname{defin}{Definition}{}
\crefname{lemma}{Lemma}{}
\crefname{cor}{Corollary}{}
\crefname{equation}{}{}
\newcommand{\toprulealg}{\hrule height.8pt depth0pt \kern2pt} 
\newcommand{\midrulealg}{\kern2pt\hrule\kern2pt} 
\newcommand{\bottomrulealg}{\kern2pt\hrule\relax}
\newcommand{\algcaption}[2][]{%
  \refstepcounter{algorithm}%
  \@ifmtarg{#1}
    {\addcontentsline{loa}{figure}{\protect\numberline{\thealgorithm}{\ignorespaces #2}}}
    {\addcontentsline{loa}{figure}{\protect\numberline{\thealgorithm}{\ignorespaces #1}}}%
  \toprulealg
  \textbf{\fname@algorithm~\thealgorithm}\ #2\par 
  \midrulealg
}
\title{Efficient Computation of Expectations under Spanning Tree Distributions}
\newcommand{\ucambridge}{\emoji[twitter]{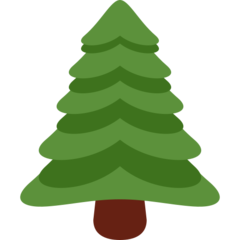}}
\newcommand{\ethz}{\emoji[twitter]{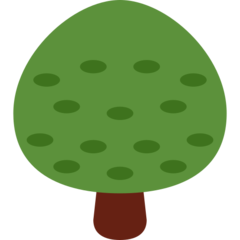}}
\newcommand{\jhu}{\emoji[twitter]{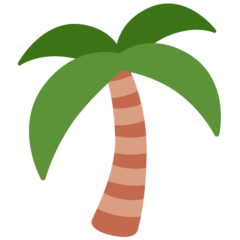}}
\newcommand{\sparkles}{\emoji[twitter]{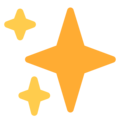}}
\renewcommand*{\@fnsymbol}[1]{{\hbox{\normalfont\sparkles\ }}}
\author{
{Ran Zmigrod\thanks{\quad Equal contribution}\hspace{7pt} \raise1.0ex\hbox{\normalfont,\ucambridge}\raise1.0ex\hbox{\normalfont}}~\;~Tim Vieira\footnotemark[1]\hspace{10pt}\raise1.0ex\hbox{\normalfont,\jhu}~\;~Ryan Cotterell\raise1.0ex\hbox{\normalfont\ucambridge,\ethz}
\\
  \raise1.0ex\hbox{\normalfont\ucambridge}University of Cambridge~\;~\raise1.0ex\hbox{\normalfont\jhu}Johns Hopkins University~\;~\raise1.0ex\hbox{\normalfont\ethz}ETH Z\"{u}rich \\
  \url{rz279@cam.ac.uk}~\;~\url{tim.f.vieira@gmail.com} \\ \url{ryan.cotterell@inf.ethz.ch}
}
\date{}
\begin{document}
\maketitle

\begin{abstract}
We give a general framework for inference in spanning tree models.
We propose unified algorithms for the important cases of first-order expectations and second-order expectations in edge-factored, non-projective spanning-tree models. 
Our algorithms exploit a fundamental connection between gradients and expectations, which allows us to derive efficient algorithms.
These algorithms are easy to implement with or without automatic differentiation software.
We motivate the development of our framework with several \emph{cautionary tales} of previous research, which has developed numerous inefficient algorithms for computing expectations and their gradients.
We demonstrate how our framework efficiently computes several quantities with known algorithms, including the expected attachment score, entropy, and generalized expectation criteria.  
As a bonus, we give algorithms for quantities that are missing in the literature, including the KL divergence.
In all cases, our approach matches the efficiency of existing algorithms and, in several cases, reduces the runtime complexity by a factor of the sentence length.
We validate the implementation of our framework through runtime experiments.
We find our algorithms are up to $15$ and $9$ times faster than previous algorithms for computing the Shannon entropy and the gradient of the generalized expectation objective, respectively.
\end{abstract}

\section{Introduction}\label{sec:intro}
Dependency trees are a fundamental combinatorial structure in natural language processing.
It follows that probability models over dependency trees are an important object of study.
In terms of graph theory, one can view a (non-projective) dependency tree as an \ar{} (commonly known as a spanning tree) of a graph.  
To build a dependency parser, 
we define a graph where the nodes are the tokens of the sentence, 
and the edges are possible dependency relations between the tokens.
The edge weights are defined by a model, which is learned from data.
In this paper, we focus on edge-factored models 
where the probability of a dependency tree is proportional to the product the weights of its edges.  As there are exponentially many trees in the length of the sentence, 
we require clever algorithms for finding the normalization constant.
Fortunately, the normalization constant for edge-factored models is efficient to compute via to the celebrated matrix--tree theorem.

The matrix--tree theorem \citep{kirchhoff}---more specifically, its counterpart for directed graphs \citep{tutte1984graph}---appeared before the NLP community in an onslaught of contemporaneous papers \citep{koo-et-al-2007, mcdonald-satta-2007,smith-smith-2007} that leverage the classic result to efficiently compute the normalization constant of a distribution over trees.
The result is still used in more recent work \citep{ma,liu-lapata-2018-learning}.
We build upon this tradition through a framework for computing expectations of a rich family of functions under a distribution over trees.
Expectations appear in all aspects of the probabilistic modeling process: training, model validation, and prediction.
Therefore, developing such a framework is key to accelerating progress in probabilistic modeling of trees.

Our framework is motivated by the lack of a unified approach for computing expectations over spanning trees in the literature.
We believe this gap has resulted in the publication of numerous inefficient
algorithms.
We motivate the importance of developing such a framework by highlighting the following \emph{cautionary tales}.\label{cautionary-tales}

\begin{itemize}[leftmargin=1em, topsep=1pt, itemsep=1pt, parsep=1pt]

\item \citet{mcdonald-satta-2007} proposed an inefficient $\bigo{\nN^5}$ algorithm for computing feature expectations, which was much slower than the $\bigo{\nN^3}$ algorithm obtained by \citet{koo-et-al-2007, smith-smith-2007}. The authors subsequently revised their paper.

\item \citet{smith-eisner-2007} proposed an $\bigo{\nN^4}$ algorithm for computing entropy.  Later, \citet{martins-etal-2010-turbo} gave an $\bigo{\nN^3}$ method for entropy, but not its gradient.  Our framework recovers \citet{martins-etal-2010-turbo}'s algorithm, and additionally provides the gradient of entropy in $\bigo{\nN^3}$.
\item \citet{druck-etal-2009-semi} proposed an $\bigo{\nN^5}$ algorithm for evaluating the gradient of the generalized expectation (GE) criterion \citep{mccallum2007generalized}.
The runtime bottleneck of their approach is the evaluation of a covariance matrix, which \citet{druck09covariance} later improved to $\bigo{\nN^4}$.
We show that the gradient of the GE criterion can be evaluated in $\bigo{\nN^3}$.
\end{itemize}

\noindent We summarize our main results below:
\begin{itemize}[leftmargin=1em, topsep=10pt, itemsep=1pt, parsep=1pt]

\item \textbf{Unified Framework}:
We develop an algorithmic framework for calculating expectations over spanning \ars{}.  We give precise mathematical assumptions on the types of functions that are supported.  We provide efficient algorithms that piggyback on automatic differentiation techniques, as our framework is rooted in a deep connection between expectations and gradients \citep{darwiche, li-eisner-2009}.

\item \textbf{Improvements to existing approaches}: 
We give asymptotically faster algorithms where several prior algorithms were known.

\item \textbf{Efficient algorithms for new quantities}: 
We demonstrate how our framework calculates several new quantities, such as the Kullback--Leibler divergence, which (to our knowledge) had no prior algorithm in the literature.

\item \textbf{Practicality}: We present practical speed-ups in the calculation of entropy compared to \citet{smith-eisner-2007}.
We observe speed-ups in the range of $4.1$ and $15.1$ in five languages depending on the typical sentence length.
We also demonstrate a $9$ times speed-up for evaluating the gradient of the GE objective compared to \citet{druck09covariance}.

\item \textbf{Simplicity}: Our algorithms are simple to implement---requiring only a few lines of PyTorch code \citep{pytorch}.
We have released a reference implementation at the following URL
\url{https://github.com/rycolab/tree_expectations}.
\end{itemize}

\section{Distributions over Trees}\label{sec:trees}
We consider the distribution over trees in weighted directed graphs with a designated root node.
A (rooted, weighted, and directed) \defn{graph} is given by $\mathcal{G}=(\nodes,\edges,\root)$.
$\nodes\teq\{1,\dots,\nN\}\cup\{\root\}$ is a set of $\nN{+}1$ nodes where $\root$ is a designated root node.
$\edges$ is a set of weighted edges where
each edge $(i\,{\xrightarrow[]{\wedge{ij}}}\, j) \tin \edges$ 
is a pair of \emph{distinct} nodes such that 
the source node $i \tin \nodes$ points to a 
destination node $j \tin \nodes$ 
with an edge weight $\wedge{ij}\in\real$.
We assume---without loss of generality---that the root node $\root$ has no incoming edges.
Furthermore, we assume only one edge can exist between two nodes.
We consider the multi-graph case
in \cref{sec:dep-parse}.

In natural language processing applications, these weights are typically parametric functions, such as log-linear models \citep{mcdonald-etal-2005-non} or neural networks \citep{dozat, ma}, which are learned from data.

A \defn{tree}\footnote{The more precise graph-theoretic term is \emph{\ar{}}.} $\tree$ of a graph $\mathcal{G}$ is a set of $\nN$ edges such that all non-root nodes $j$ have exactly one incoming edge and the root node $\root$ has at least one outgoing edge.
Furthermore, a tree does not contain any cycles.
We denote the set of all trees in a graph by $\st$ and assume that $\abs{\st}>0$ (this is not necessarily true for all graphs).

The \defn{weight of a tree} $\tree \tin \st$ is defined as:
\begin{equation}
    \wtree{\tree} \defeq \smashoperator{\prod_{\edgeij\in\tree}} \wedge{ij}
\end{equation}
Normalizing the weight of each tree yields a \defn{probability distribution}:
\begin{equation}\label{eq:prob-tree}
    \ptree{\tree} \defeq \frac{\wtree{\tree}}{\wbar}
\end{equation}
where the \defn{normalization constant} is defined as
\begin{equation}
    \wbar \defeq \sumtree \wtree{\tree}=\sumtree \prod_{\edgeij \in \tree} \wedge{ij}
\end{equation}
Of course, for \cref{eq:prob-tree} to be a \emph{proper} distribution, we require $\wedge{ij}\! \ge\! 0$ for all $\edgeij\!\in\!\edges$, and $\wbar\! >\! 0$.

\subsection{The Matrix--Tree Theorem}\label{sec:mtt}
The normalization constant $\wbar$ involves a sum over $\st$, which can grow exponentially large with $\nN$.  
Fortunately, there is sufficient structure in the computation of $\wbar$ that it can be evaluated in $\bigo{\nN^3}$ time. 
The Matrix--Tree Theorem (MTT) \citep{tutte1984graph,kirchhoff} establishes a connection between $\wbar$ and the determinant of the \defn{Laplacian matrix}, $\lap\in\real^{\nN\times \nN}$.
For all $i,j\in\nodes\!\setminus\!\{\root\}$, 
\begin{equation}\label{eq:lap}
    \lapelem{ij} \defeq
    \begin{cases}
    \sum\limits_{i'\in\nodes\setminus\{j\}}\wedge{i'j} & \textbf{ if } i=j \\
    -\wedge{ij} & \textbf{ otherwise}
    \end{cases}
\end{equation}

\newcommand{\foo}[0]{\citet[p. 140]{tutte1984graph}}
\begin{thm}[Matrix--Tree Theorem; \foo]\label{thm:mtt}
For any graph, 
\begin{equation}\label{eq:minor}
    \wbar = \abs{\lap}
\end{equation}
Furthermore, the normalization constant can be computed in $\bigo{\nN^3}$ time.\footnote{For simplicity, we assume that the runtime of matrix determinants is $\bigo{\nN^3}$.  However, we would be remiss if we did not mention that algorithms exist to compute the determinant more efficiently \citep{dumas2016fast}.}
\end{thm}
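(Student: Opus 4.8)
The plan is to prove \cref{eq:minor} via the Cauchy--Binet formula, reducing $\abs{\lap}$ to a sum over size-$\nN$ subsets of edges and then showing that exactly the trees survive. First I would rewrite $\lap$ as a product of two thin matrices. Index the non-root nodes by $\{1, \dots, \nN\}$, let $\mathbf{e}_j$ denote the $j$th standard basis vector of $\real^{\nN}$ with the convention $\mathbf{e}_{\root} \teq \mathbf{0}$, and introduce $\mathbf{U}, \mathbf{V} \tin \real^{\nN \times \abs{\edges}}$ whose column for edge $\edgeij$ is $\mathbf{e}_j - \mathbf{e}_i$ and $\wedge{ij}\,\mathbf{e}_j$, respectively. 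Checking the diagonal and off-diagonal entries against \cref{eq:lap} gives $\lap \teq \mathbf{U}\mathbf{V}^{\top}$.

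By the Cauchy--Binet formula,
\begin{equation}
\abs{\lap} = \sum_{S} \det{\mathbf{U}_S}\,\det{\mathbf{V}_S},
\end{equation}
where $S$ ranges over the $\nN$-element subsets of $\edges$ and $\mathbf{U}_S, \mathbf{V}_S$ are the $\nN \times \nN$ submatrices formed by the columns indexed by $S$. Each factor admits a combinatorial reading. Since the columns of $\mathbf{V}_S$ are $\wedge{ij}\,\mathbf{e}_j$, its determinant vanishes unless the heads of the edges in $S$ are pairwise distinct and hence form a permutation of $\{1, \dots, \nN\}$, i.e.\ unless $S$ assigns exactly one incoming edge to every non-root node; when this holds, $\det{\mathbf{V}_S} \teq \pm \prod_{\edgeij \in S} \wedge{ij}$.

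The crux is analyzing $\det{\mathbf{U}_S}$ under this head condition, and the sign bookkeeping is the step I expect to be the main obstacle. Sorting the columns of $\mathbf{U}_S$ and of $\mathbf{V}_S$ so that the edge with head $j$ sits in column $j$ applies the \emph{same} permutation to both, so the two resulting signs coincide and multiply to $+1$. After sorting, column $j$ of $\mathbf{U}_S$ is $\mathbf{e}_j - \mathbf{e}_{p(j)}$, where $p(j)$ is the unique parent of $j$ in $S$; if the parent map contains a cycle among non-root nodes the corresponding columns sum to $\mathbf{0}$ and $\det{\mathbf{U}_S} \teq 0$, whereas if it is acyclic then every chain of parents terminates at $\root$ (which in particular forces $\root$ to have an outgoing edge), the matrix is triangular with unit diagonal in a topological order, and $\det{\mathbf{U}_S} \teq 1$. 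Hence the surviving $S$ are precisely the acyclic, in-degree-one edge sets, which by the definition in \cref{sec:trees} are exactly the trees $\tree \tin \st$, each contributing $\prod_{\edgeij \in \tree} \wedge{ij} \teq \wtree{\tree}$; summing gives $\abs{\lap} \teq \sumtree \wtree{\tree} \teq \wbar$. For the complexity claim, assembling $\lap$ from \cref{eq:lap} costs $\bigo{\nN^2}$ and its determinant is computed by Gaussian elimination in $\bigo{\nN^3}$, which yields the stated bound.
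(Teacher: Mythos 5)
Your proof is correct, but it cannot be compared against an in-paper argument: the paper states \cref{thm:mtt} as a classical result and defers entirely to \citet{tutte1984graph} (and, for the runtime, to standard determinant algorithms), so you have supplied a proof where the paper supplies a citation. What you give is the standard incidence-factorization proof of the weighted, directed matrix--tree theorem via Cauchy--Binet, and the details check out. The factorization $\lap \teq \mathbf{U}\mathbf{V}^{\top}$ is verified correctly against \cref{eq:lap}; in particular your convention $\mathbf{e}_{\root} \teq \mathbf{0}$ is the right device for the directed setting, since it makes edges out of $\root$ contribute only to the diagonal (through the $+1$ in $\mathbf{e}_j - \mathbf{e}_{\root} \teq \mathbf{e}_j$) while exploiting the paper's standing assumption that $\root$ has no incoming edges, so every column head $j$ is a non-root node. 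Note also that the asymmetric choice $\mathbf{V}$ (head indicators scaled by $\wedge{ij}$) rather than the symmetric $\mathbf{U}\teq\mathbf{V}$ of the undirected case is essential here, as $\lap$ is not symmetric; this is exactly what restricts the surviving subsets to in-degree-one edge sets rather than undirected spanning trees. The two steps you flag as the crux are handled soundly: applying the same column permutation to $\mathbf{U}_S$ and $\mathbf{V}_S$ cancels the signs since $\mathrm{sgn}(\sigma)^2 \teq 1$, and the cycle/acyclic dichotomy for $\det{\mathbf{U}_S}$ is right---columns along a cycle of the parent map sum to $\mathbf{0}$, while in the acyclic case a simultaneous row-and-column permutation into topological order (which preserves the determinant) yields a triangular matrix with unit diagonal. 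Your observation that acyclicity plus in-degree one forces some chain to terminate at $\root$, so the root automatically has an outgoing edge, correctly matches the paper's definition of a (multi-root) tree in \cref{sec:trees}; the only trivial case you leave implicit is $\abs{\edges} < \nN$, where the Cauchy--Binet sum is empty and $\st \teq \emptyset$, consistent with $\wbar \teq 0$ (the paper assumes $\abs{\st} > 0$ anyway). The complexity claim is handled as the paper intends: $\bigo{\nN^2}$ assembly of $\lap$ plus an $\bigo{\nN^3}$ determinant by Gaussian elimination. One caveat on scope: your argument proves \cref{thm:mtt} exactly as stated (the multi-root set $\st$ with Laplacian \cref{eq:lap}); the single-root variant of \cref{prop:mtt-root-weighted-laplacian} would need a separate adaptation for the replaced row in \cref{eq:root-weighted-laplacian}, but that is not part of the statement you were asked to prove.
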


\subsection{Dependency parsing \& the Laplacian zoo}
\label{sec:dep-parse}
Graph-based dependency parsing can be encoded as follows.
For each sentence of length $\nN$,
we create a graph $\mathcal{G}=(\nodes,\edges,\root)$ where 
each non-root node represents a token of the sentence, and $\root$ represents a special root symbol of the sentence.
Each edge $\edgeij$ in the graph represents a \emph{possible} dependency relation between 
head word $i$ and modifier word $j$.
\cref{fig:deptree} gives an example dependency tree.
In the remainder of this section, 
we give several variations on the Laplacian matrix that encode different sets of valid trees.\footnote{The reader may want to skip this section on their first reading.}

In many cases of dependency parsing, we want $\root$ to have exactly one outgoing edge.
This is motivated by linguistic theory, where the root of a sentence should be a token in the sentence rather than a special root symbol \citep{tesniere1959elements}.
There are exceptions to this, such as 
parsing Twitter \citep{kong-etal-2014-dependency}
and parsing specific languages (e.g., The Prague Treebank \citep{prague_dep}).
We call these \defn{multi-root trees}\footnote{We follow the conventions of \citet{koo-et-al-2007} and say ``single-root'' and ``multi-root'' when we \emph{technically} mean the number of outgoing edges from the root $\root$, and \emph{not} the number of root nodes in a tree, which is always one.} and are represented by the set $\st$ as described earlier.
Therefore, the normalization constant over all multi-root trees can be computed by a direct application of \cref{thm:mtt}.

However, in most dependency parsing corpora, 
only one edge may emanate from the root \citep{ud,zmigrod2020mind}.
Thus, we consider the set of \defn{single-rooted trees}, denoted $\st^{(1)}$.
\citet{koo-et-al-2007} adapts \cref{thm:mtt} to efficiently compute $\wbar$ for the set $\st^{(1)}$ with the \defn{root-weighted Laplacian},\footnote{The choice to replace row 1 by the root edges is done by convention, we can replace \emph{any} row in the construction of $\laphat$.} $\laphat\in\real^{\nN\times \nN}$
\begin{equation}\label{eq:root-weighted-laplacian}
    \laphatelem{ij}=
    \begin{cases}
    \wedge{\root j} & \textbf{ if } i=1 \\
    \sum\limits_{i'\in\nodes\setminus\{\root,j\}}\wedge{i'j} & \textbf{ if } i=j \\
    -\wedge{ij} & \textbf{ otherwise}
    \end{cases}
\end{equation}

\begin{prop}\label{prop:mtt-root-weighted-laplacian}
For any graph, the normalization constant over all single-rooted trees is given by the determinant of the root-weighted Laplacian \cite[Prop. 1]{koo-et-al-2007}
\begin{equation}
    \wbar=\abs{\laphat}
\end{equation}
Furthermore, the normalization constant for single-rooted trees can be computed in $\bigo{\nN^3}$ time.
\end{prop}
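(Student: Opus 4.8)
The plan is to combine the directed Matrix--Tree Theorem (\cref{thm:mtt}) with a Laplace (cofactor) expansion of $\abs{\laphat}$ along its first row. The starting observation is structural: rows $2,\dots,\nN$ of $\laphat$ coincide \emph{exactly} with the corresponding rows of the Laplacian $\lapp$ of the \emph{root-deleted} subgraph $\mathcal{G}'$, i.e.\ the graph on nodes $\{1,\dots,\nN\}$ that keeps only the non-root edges. Indeed, the diagonal of $\lapp$ is $\sum_{i'\in\nodes\setminus\{\root,j\}}\wedge{i'j}$, matching the middle case of \cref{eq:root-weighted-laplacian}, and its off-diagonal entries are $-\wedge{ij}$. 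Thus $\laphat$ is simply $\lapp$ with its first row overwritten by the root-edge vector $(\wedge{\root 1},\dots,\wedge{\root\nN})$.

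Next I would set up the combinatorial side. Every single-rooted tree uses exactly one root edge $\edge{\root}{k}$; deleting that edge and the root leaves $\nN{-}1$ non-root edges in which $k$ has in-degree $0$, every other node has in-degree $1$, and no cycles remain --- that is, an \ar{} of $\mathcal{G}'$ rooted at $k$. This correspondence is a weight-preserving bijection, the weight factoring as $\wedge{\root k}$ times the weight of the sub-\ar{}, so
\begin{equation}
  \otherwbar{1} \;=\; \sum_{k=1}^{\nN} \wedge{\root k}\, A_k, \qquad A_k \defeq \smashoperator{\sum_{\tree'\in\st'_k}} \wtree{\tree'},
\end{equation}
where $\st'_k$ is the set of \ars{} of $\mathcal{G}'$ rooted at $k$. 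Applying \cref{thm:mtt} to $\mathcal{G}'$ with designated root $k$ identifies $A_k$ with $\minor{\lapp}{k}$, the determinant of $\lapp$ with row and column $k$ removed, since that reduced matrix is precisely the Laplacian of $\mathcal{G}'$ indexed by its non-root nodes.

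It then remains to match this with the determinant. Expanding $\abs{\laphat}$ along the first row gives $\abs{\laphat}=\sum_{j}\wedge{\root j}\,(-1)^{1+j}M_{1j}$, and because $\laphat$ and $\lapp$ agree off the first row, the minor $M_{1j}$ (delete row $1$, column $j$) is computed entirely from $\lapp$. The crux is the identity $(-1)^{1+j}M_{1j}=\minor{\lapp}{j}$, i.e.\ the cofactor that deletes row $1$ equals the one that deletes row $j$. This is the one genuinely non-trivial step, which I would isolate as a lemma: \emph{for any matrix whose columns each sum to zero, all cofactors sharing a fixed column index are equal}. I would prove it via the adjugate --- columns summing to zero makes $\mathbf{1}^{\top}$ a left null vector, so $\lapp$ is singular and every row of $\mathrm{adj}(\lapp)$ lies in its left null space; when $\lapp$ has rank $\nN{-}1$ that null space is exactly $\mathrm{span}\{\mathbf{1}\}$, forcing the cofactors in each column to be constant, while when the rank is smaller the adjugate vanishes and the equality is trivial. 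A short computation confirms the columns of $\lapp$ do sum to zero.

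Granting the lemma, $\abs{\laphat}=\sum_j \wedge{\root j}\minor{\lapp}{j}=\sum_j \wedge{\root j}\,A_j=\otherwbar{1}$, which is the claim; and since $\laphat$ is $\nN\times\nN$ the whole computation is a single determinant, giving the $\bigo{\nN^3}$ bound. The main obstacle is the cofactor-equality lemma --- once it is in place, the bijection and the determinant bookkeeping are routine.
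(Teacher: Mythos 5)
Your proposal is correct, but note that the paper itself contains no proof of this proposition: it defers entirely to \citet[Prop.~1]{koo-et-al-2007}, so there is no internal argument to compare against. What you have reconstructed is essentially the standard proof of that cited result: expand $\abs{\laphat}$ along the replaced first row, use the zero-column-sum/adjugate lemma to identify each first-row cofactor with the principal minor $\minor{\lapp}{j}$, apply the directed matrix--tree theorem (\cref{thm:mtt}) to the root-deleted graph to read $\minor{\lapp}{j}$ as the total weight of \ars{} rooted at $j$ (valid because edges into the designated root never enter the Laplacian, so the no-incoming-edges convention is harmless), and close with the weight-preserving edge-deletion bijection. All steps check out, including the degenerate rank $\le \nN - 2$ case of your cofactor lemma and the observation that the expansion minors depend only on rows $2,\dots,\nN$, which $\laphat$ shares with the root-deleted Laplacian. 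One cosmetic caution: the paper already uses $\lapp$ for the sparse derivative of the Laplacian in \cref{eq:dlap}, so you should rename your root-deleted Laplacian to avoid a notational clash.
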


\paragraph{Labeled trees.}
To encode \emph{labeled} dependency relations in our set of trees, we simply augment edges with labels---resulting in a \defn{multi-graph} in which multiple edges may exist between pairs of nodes.
Now, edges take the form $(i\!\xrightarrow[]{y / w_{ijy}}\!j)$ where $i$ and $j$ are the source and destination nodes as before, $y \in \lab$ is the label, and $w_{ijy}$ is their weight.
\begin{prop}\label{prop:labeled-mtt}
For any multi-graph, the normalization constant for multi-root or single-rooted trees
can be calculated using \cref{thm:mtt} or \cref{prop:mtt-root-weighted-laplacian} (respectively) 
with the edge weights,
\begin{equation}
    \wedge{ij}=\sum_{y\in\lab}\wedge{ijy}
\end{equation}
Furthermore, the normalization constant can be computed in $\bigo{\nN^3 + \abs{\lab} \nN^2}$ time.\footnote{The algorithms given in later sections will not provide full details for the labeled case due to space constraints, but we assure the reader that our algorithms can be straightforwardly generalized to the labeled setting.}
\end{prop}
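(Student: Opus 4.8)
The plan is to reduce the labeled (multi-graph) case to the unlabeled case already handled by \cref{thm:mtt} and \cref{prop:mtt-root-weighted-laplacian}, by showing that summing the label weights edge-by-edge is exactly equivalent to summing tree weights over the enlarged label-augmented tree set. First I would make precise what a tree in the multi-graph means: since multiple edges may connect a pair of nodes, a labeled tree is a choice, for each non-root node $j$, of both a head $i$ and a label $y$ on the edge $\edge{i}{j}$. Thus the set of labeled trees decomposes as a disjoint union over the underlying unlabeled tree structures, where each structure $\tree$ with edge set $\{\edgeij\}$ gives rise to $\prod_{\edgeij \in \tree} \abs{\lab}$ labeled refinements (one independent label choice per edge).

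The key algebraic step is a distributivity argument. The normalization constant over labeled trees is $\sum_{\tree} \prod_{\edgeij \in \tree} \bigl( \sum_{y \in \lab} \wedge{ijy} \bigr)$, since each edge in a fixed structure may independently take any label and the tree weight is the product of its edge weights. Expanding the product of sums via distributivity yields exactly the sum over all labeled refinements of the product of their chosen label weights, i.e.\ the true labeled normalization constant. Hence defining the aggregated edge weight $\wedge{ij} = \sum_{y \in \lab} \wedge{ijy}$ and feeding it into the (single-graph) Matrix--Tree machinery computes the correct labeled normalization constant. The same argument applies verbatim to the single-rooted case, because the root-edge choice is likewise a joint choice of endpoint and label, and the constraint of exactly one outgoing root edge is preserved under label aggregation.

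Given this equivalence, the runtime claim follows by accounting: computing each aggregated weight $\wedge{ij}$ costs $\bigo{\abs{\lab}}$, and there are $\bigo{\nN^2}$ node pairs, contributing the $\abs{\lab}\nN^2$ term; after aggregation, \cref{thm:mtt} or \cref{prop:mtt-root-weighted-laplacian} computes the determinant in $\bigo{\nN^3}$. Summing gives the claimed $\bigo{\nN^3 + \abs{\lab}\nN^2}$ bound.

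The main obstacle I anticipate is not the algebra, which is a routine application of distributivity, but rather making the combinatorial correspondence airtight---specifically, verifying that label aggregation commutes with the \emph{acyclicity} and \emph{single-incoming-edge} constraints that define a valid tree. Since labels decorate edges without altering their endpoints, a labeled edge set is acyclic and satisfies the incoming-edge constraint if and only if its underlying unlabeled structure does; therefore the disjoint-union decomposition respects the tree constraints exactly, and the aggregated weights inherit the correct support. I would state this invariance explicitly to ensure that no spurious structures are introduced or valid ones excluded by the aggregation.
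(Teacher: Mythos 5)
Your proof is correct, and it is exactly the intended argument: the paper states \cref{prop:labeled-mtt} without giving a proof, and the standard reasoning it relies on is precisely your distributivity step $\sum_{\tree}\prod_{\edgeij\in\tree}\bigl(\sum_{y\in\lab}\wedge{ijy}\bigr)$ expanding into the sum over all labeled refinements, together with the observation that the tree constraints depend only on the underlying unlabeled edge structure. Your explicit check that acyclicity, the single-incoming-edge constraint, and the single-root constraint are label-invariant, plus the $\bigo{\abs{\lab}\nN^2}$ aggregation accounting, makes your write-up a complete proof of what the paper asserts.
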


\begin{figure}[t]
    \centering
    \begin{dependency}[theme = simple]
    \begin{deptext}[column sep=0.5em]
      We \& compute \& expectations \& very \& efficiently \\
    \end{deptext}
    \deproot{2}{\large root}
    \depedge{2}{1}{\large nsubj}
    \depedge{2}{3}{\large dobj}
    \depedge{2}{5}{\large advmod}
    \depedge{5}{4}{\large advmod}
    \end{dependency}
    \caption{Example of a dependency tree}
    \label{fig:deptree}
\end{figure}

\paragraph{Summary.} We give common settings in which the MTT can be adapted to efficiently compute $\wbar$ for different sets of trees.
The choice is dependent upon the task of interest, and one must be careful to choose the correct Laplacian configuration.
The results we present in this paper are modular in the specific choice of Laplacian.
For the remainder of this paper, we assume the unlabeled tree setting and will refer to the set of trees as simply $\st$ and our choice of Laplacian as $\lap$.

\section{Expectations}
\label{sec:expect}

In this section, we characterize the family of expectations that our framework supports.
Our framework is an extension of \citet{li-eisner-2009} to distributions over spanning trees.
In contrast, their framework considers expectations over distributions that can be factored as B-hypergraphs \citep{gallo}.
Our distributions over trees cannot be cast as polynomial-size B-hypergraphs.
Another important distinction between our framework and that of \citet{li-eisner-2009} is that we do not use the semiring abstraction as it is algebraically too weak to compute the determinant efficiently.\footnote{\label{fn:generalized-mtt}In fact, \citet{Jerrum-Snir-1982} proved that the
partition function for spanning trees requires an exponential number of additions and multiplications in the semiring model of computation (i.e., assuming that subtraction is not allowed).
Interestingly, division is not required, but algorithms for division-free determinant computation run in $\bigo{\nN^4}$ \citep{kaltofen1992computing}. An excellent overview of \emph{the power of subtraction} in the context of dynamic programming is given in \citet[Ch.\@ 3]{miklos2019computational}.  It would appear as if commutative rings would make a good level of abstraction as they admit efficient determinant computation. 
Interestingly, this means that we cannot use the MTT in the max-product semiring to (efficiently) find the maximum weight tree because max does not have an inverse.  Fortunately, there exist $\bigo{\nN^2}$ algorithms to find the maximum weight tree for both the single-root and multi-root settings \citep{zmigrod2020mind,GabowT84}.%
}

The \defn{expected value} of a function $\treefuncSig{f}{\nF}$ is defined as follows
\begin{equation}
    \expect{\tree}{f(\tree)} \defeq \sumtree\ptree{\tree}\treefunc{f}{\tree}  \label{eq:general-expectation}
\end{equation}
Without any assumptions on $f$, computing \cref{eq:general-expectation} is intractable.\footnote{Of course, one could use sampling methods, such as Monte Carlo, to approximate \cref{eq:general-expectation}.  Sampling methods may be efficient if the variance of $f$ under $p$ is not too large.} 
In the remainder of this section, we will characterize a class of functions $f$ whose expectations can be efficiently computed.

The first type of functions we consider are functions that are \defn{additively decomposable}
along the edges of the tree.
Formally, a function $\treefuncSig{r}{\nR}$ is additively decomposable if it can be written as
\begin{equation}
    \rtree{\tree} = \smashoperator{\sum_{\edgeij\in\tree}}\redge{ij}
\end{equation}
where we abuse notation slightly by for any function $\treefuncSig{r}{\nR}$, we consider the edge function $\redge{ij}$ as a vector of edge values.
An example of an additively decomposable function is  $\rtree{\tree} = -\log \ptree{\tree}$ whose expectation gives the Shannon entropy.\footnote{
Proof: $-\log \ptree{\tree} \teq -\log(\frac{1}{\wbar} \prod_{\edgeij\in\tree} \wedge{ij})$\\
$\teq \log\wbar \,{-}\, \sum_{\edgeij \in \tree} \log\wedge{ij}$.\\ $\Rightarrow$ $\redge{ij} \teq \frac{1}{\nN}\log\wbar\,{-}\,\log\wedge{ij}$.}
Other first-order expectations include the expected attachment score and the Kullback--Leibler divergence.
We demonstrate how to compute these in our framework in and \cref{sec:uas} and \cref{sec:kl}, respectively.

The second type of functions we consider are functions that are \defn{second-order additively decomposable} along the edges of the tree.
Formally, a function $\treefuncSig{r}{\nR}$ is second-order additively decomposable if it can be written as the outer product of two additively decomposable functions, $\treefuncSig{r}{\nR}$ and  $\treefuncSig{s}{\nS}$
\begin{equation}\label{eq:first-order}
    \ttree{\tree} = \rtree{\tree} \stree{\tree}^\top
\end{equation}
Thus, $\ttree{\tree} \in \real^{\nR \times \nS}$ is generally a matrix.

An example of such a function is the gradient of entropy (see \cref{sec:ent}) or the GE objective \citep{mccallum2007generalized} (see \cref{sec:ge})
with respect to the edge weights.
Another example of a second-order additively decomposable function is the covariance matrix.
Given two feature functions $\treefuncSig{r}{\nR}$ and $\treefuncSig{s}{\nS}$,
their covariance matrix 
is $\expect{\tree}{\rtree{\tree}\stree{\tree}^\top} - \expect{\tree}{\rtree{\tree}}\expect{\tree}{\stree{\tree}}^\top$.  Thus, it is second-order additively decomposable function as long as $\rtree{d}$ and $\stree{d}$ are additively decomposable.

One family of functions which can be computed efficiently but we will not explore here are those who are \defn{multiplicatively decomposable} over the edges.
A function $\treefuncSig{q}{\nQ}$ is multiplicatively decomposable if it can be written as
\begin{equation}
    \treefunc{q}{\tree} = \smashoperator{\prod_{\edgeij\in\tree}}\edgefunc{q}{ij}
\end{equation}
where the product of $q_{ij}$ is an element-wise vector product.
These functions form a family that we will call zero$\th$-order expectations and can be computed with a constant number of calls to MTT (usually two or three).
Examples of these include the \renyi entropy and $\ell_p$-norms.\footnote{
The $\ell_k$ norm of the distribution $p$
often denoted as
$\| p \|_k \defeq \left( \sum_{\tree\in\st}p(\tree)^{k} \right)^{1/k}$ for $k\!\ge\!0$.  
It is computable from a zero$\th$-order expectation 
because it can be written as
$(\frac{\otherwbar{k}}{\wbar^{k}})^{1/k}$ 
where $\otherwbar{k} \teq \sum_{\tree \in \st}\wtree{\tree}^{k}
\!=\! \sum_{\edgeij\in \tree}\wedge{ij}^{\!\!\!\!k}$, 
which is clearly a zero$\th$-order expectation.
Similarly, the \renyi entropy of order $\alpha\!\ge\!0$ with $\alpha \!\ne\! 1$ is
$\ent_\alpha(p)\,{\defeq}\, \frac{1}{1-\alpha}\log\left(\sum_{\tree\in\st}p(\tree)^{\alpha}\right) =$ 
$\frac{1}{1-\alpha}\log\left(\frac{\otherwbar{\alpha}}{\wbar^{\alpha}}\right)$.
}

\section{Connecting gradients and expectations}\label{sec:connections}
In this section, we build upon a fundamental connection between gradients and expectations \citep{darwiche,li-eisner-2009}.
This connection allows us to build on work in automatic differentiation to obtain efficient gradient algorithms.
While the propositions in this section are inspired from past work, we believe that the presentation and proofs of these propositions have previously not been clearly presented.\footnote{\citet[Section 5.1]{li-eisner-2009} provides a similar derivation to \cref{prop:first-weight} and \cref{prop:rbar} for hypergraphs.}
We find it convenient to work with unnormalized expectations, or totals (for short).
We denote the \defn{total} of a function $f$ as $\total{f} \defeq \sumtree \wtree{\tree} \treefunc{f}{\tree}$. 
We recover the expectation with $\expect{\prob}{f} = \total{f} / \wbar$.
We note that totals (on their own) may be of interest in some applications \citep[Section 5.3]{vieira-eisner-2017-learning}.

\paragraph{The first-order case.}
Specifically, the partial derivative $\frac{\partial \wbar}{\partial \wedge{ij}}$ is useful for determining the \defn{total weight} of trees which include the edge $\edgeij$,
\begin{equation}
    \totalw{ij} \defeq \sum_{\tree\in\inc{ij}}\wtree{\tree}
\end{equation}
where $\inc{ij}\defeq \{ \tree \in \st \mid \edgeij \in \tree \}$.
Furthermore, $\prob(\edgeij \in \tree) = \totalw{ij}/\wbar = \frac{\wedge{ij}}{\wbar} \frac{\partial \wbar}{\partial \wedge{ij}}$.\footnote{Some authors (e.g., \citet{wainwright}) prefer to work with an exponentiated representation $\wedge{ij} = \exp(\theta_{ij})$ so that $\nabla_{\theta_{ij}} \log \wbar = p( \edge{i}{j} \in \tree )$.  This avoids an explicit division by $\wbar$, and multiplication by $\wedge{ij}$ as these operations happens by virtue of the chain rule.}
\begin{prop}
\label{prop:first-weight} 
For any edge $\edgeij$,
\begin{equation}\label{eq:total}
    \totalw{ij}=\frac{\partial \wbar}{\partial \wedge{ij}}\wedge{ij}
\end{equation}
\end{prop}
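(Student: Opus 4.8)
The plan is to establish \cref{eq:total} by exploiting the linearity of $\wbar$ in each individual edge weight. The key observation is that $\wbar = \sumtree \prod_{\edgekl \in \tree} \wedge{kl}$ is a multilinear polynomial in the edge weights: because each tree $\tree$ is a \emph{set} of edges, the weight $\wedge{ij}$ appears to the first power (or not at all) in each product $\wtree{\tree}$. This means that $\frac{\partial \wbar}{\partial \wedge{ij}}$ simply extracts the coefficient of $\wedge{ij}$, which is exactly the sum over those trees that contain $\edgeij$ of the product of their \emph{remaining} edge weights.

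Concretely, first I would partition the set of trees $\st$ into those that contain the edge $\edgeij$, namely $\inc{ij}$, and those that do not. Writing $\wbar = \sum_{\tree \in \inc{ij}} \wtree{\tree} + \sum_{\tree \in \st \setminus \inc{ij}} \wtree{\tree}$, I would observe that the second sum has no dependence on $\wedge{ij}$, so its partial derivative vanishes. For each $\tree \in \inc{ij}$, I would factor out the edge weight of interest, $\wtree{\tree} = \wedge{ij} \prod_{\edgekl \in \tree,\ \edgekl \ne \edgeij} \wedge{kl}$, and note that the remaining product is a constant with respect to $\wedge{ij}$. Differentiating then gives
\begin{equation}
    \frac{\partial \wbar}{\partial \wedge{ij}} = \sum_{\tree \in \inc{ij}} \ \smashoperator{\prod_{\substack{\edgekl\in\tree \\ \edgekl \ne \edgeij}}} \wedge{kl} = \frac{1}{\wedge{ij}} \sum_{\tree \in \inc{ij}} \wtree{\tree} = \frac{\totalw{ij}}{\wedge{ij}}.
\end{equation}
Multiplying both sides by $\wedge{ij}$ yields the claimed identity $\totalw{ij} = \frac{\partial \wbar}{\partial \wedge{ij}} \wedge{ij}$, recalling the definition $\totalw{ij} = \sum_{\tree \in \inc{ij}} \wtree{\tree}$.

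The only subtlety I anticipate, and the step I would treat most carefully, is the division by $\wedge{ij}$ in the middle equality: this is purely a bookkeeping device to re-express the derivative in terms of $\totalw{ij}$, and it is valid as a formal manipulation of the polynomial even when $\wedge{ij} = 0$, since the final statement $\totalw{ij} = \frac{\partial \wbar}{\partial \wedge{ij}} \wedge{ij}$ holds as a polynomial identity regardless. I would phrase the argument so that it rests on the multilinear structure directly, thereby sidestepping any concern about vanishing weights. No appeal to the matrix--tree theorem is needed for the identity itself; \cref{thm:mtt} only enters later when one wants to \emph{compute} $\frac{\partial \wbar}{\partial \wedge{ij}}$ efficiently via the determinant.
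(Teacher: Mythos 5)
Your proof is correct and takes essentially the same approach as the paper's: both rest on the multilinearity of $\wbar$ in the edge weights, factoring $\wedge{ij}$ out of each $\wtree{\tree}$ for $\tree \in \inc{ij}$ and identifying $\frac{\partial \wbar}{\partial \wedge{ij}}$ with the sum of the residual products. The only difference is presentational---the paper runs the chain of equalities from $\totalw{ij}$ toward the derivative, which avoids the intermediate division by $\wedge{ij}$ that you rightly flag and justify as a formal bookkeeping step valid even when $\wedge{ij}=0$.
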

\begin{proof}
\begin{align*}
    \totalw{ij} &= \sum_{\tree\in\inc{ij}}\wtree{\tree} \\
    &= \sum_{\tree\in\inc{ij}}\prod_{\edgeijp\in\tree}\wedge{i'j'} \\
    &= \wedge{ij} \sum_{\tree\in\inc{ij}}\prod_{\substack{\edgeijp\in \\\tree\setminus \{\edgeij\}}}\wedge{i'j'}  \\
    &=\wedge{ij} \frac{\partial}{\partial \wedge{ij}} \sumtree\prod_{\edgeijp\in\tree} \wedge{i'j'} \\
    &= \frac{\partial \wbar}{\partial \wedge{ij}} \wedge{ij}
\end{align*}
\end{proof}

\cref{prop:rbar} will establish a connection between the unnormalized expectation $\rbar$ and $\nabla\wbar$.

\begin{prop}
\label{prop:rbar}
For any additively decomposable function $\treefuncSig{r}{\nR}$, the total $\rbar$ can be computed using a gradient--vector product
\begin{equation}
    \rbar = \smashoperator{\sum_{\edgeij\in\edges}}\totalw{ij}\redge{ij}
\end{equation}
\end{prop}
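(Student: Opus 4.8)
The plan is to establish the identity
\(
\rbar = \sum_{\edgeij\in\edges}\totalw{ij}\redge{ij}
\)
by expanding the definition of the total of an additively decomposable function and then swapping the order of summation so that the per-edge weights \(\totalw{ij}\) emerge naturally. First I would write out \(\rbar = \sumtree \wtree{\tree}\,\rtree{\tree}\) from the definition of the total, and substitute the additive decomposition \(\rtree{\tree} = \sum_{\edgeij\in\tree}\redge{ij}\) so that the expression becomes a double sum, outer over trees \(\tree\in\st\) and inner over the edges \(\edgeij\in\tree\). At this point each summand carries the factor \(\wtree{\tree}\,\redge{ij}\).

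The key step is to interchange the two summations. After the swap, the outer sum ranges over all edges \(\edgeij\in\edges\), and the inner sum ranges over exactly those trees that \emph{contain} the edge \(\edgeij\) --- that is, over \(\tree\in\inc{ij}\), using the notation \(\inc{ij}=\{\tree\in\st \mid \edgeij\in\tree\}\) introduced just above. Since \(\redge{ij}\) does not depend on the remaining structure of the tree, it factors out of the inner sum, leaving \(\sum_{\tree\in\inc{ij}}\wtree{\tree}\) multiplied by \(\redge{ij}\). Recognizing the inner sum as precisely the definition of the total weight \(\totalw{ij}=\sum_{\tree\in\inc{ij}}\wtree{\tree}\) then yields the claimed formula immediately.

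I expect the only real subtlety to be bookkeeping in the summation swap: one must be careful that the pair \((\tree,\edgeij)\) ranges over the same index set before and after the interchange, namely \(\{(\tree,\edgeij) : \tree\in\st,\ \edgeij\in\tree\}\). Viewing the double sum as a single sum over this set of incidence pairs makes the reindexing transparent and justifies the swap without any convergence concerns, as all sums are finite. Everything else is substitution of definitions, so this is the main (and quite mild) obstacle. Note that this proposition pairs naturally with \cref{prop:first-weight}, which expresses each \(\totalw{ij}\) as a gradient \(\frac{\partial \wbar}{\partial \wedge{ij}}\wedge{ij}\); combining the two gives the advertised gradient--vector product interpretation, but the proof of the present statement requires only the combinatorial rearrangement described above.
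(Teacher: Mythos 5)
Your proposal is correct and follows exactly the paper's own proof: expand $\rbar = \sumtree \wtree{\tree}\rtree{\tree}$, substitute the additive decomposition, swap the order of summation over incidence pairs $(\tree, \edgeij)$, and recognize the inner sum as $\totalw{ij}$. The reindexing justification you flag as the only subtlety is handled implicitly in the paper's chain of equalities, so there is no gap.
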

\begin{proof}
\begin{align*}
    \rbar &= \sumtree\wtree{\tree}\rtree{\tree} \\
    &= \sumtree\wtree{\tree}\sum_{\edgeij\in \tree} \redge{ij} \\
    &= \sumtree\sum_{\edgeij\in \tree}\wtree{\tree}\redge{ij} \\
    &= \sum_{\edgeij\in\edges}\sum_{\tree\in\inc{ij}}\wtree{\tree}\redge{ij} \\
    &= \smashoperator{\sum_{\edgeij\in\edges}}\totalw{ij}\redge{ij}
\end{align*}
\end{proof}

\paragraph{The second-order case.}
We can similarly use $\frac{\partial^2 \wbar}{\partial \wedge{ij}\, \partial \wedge{kl}}$ to determine the total weight of trees which include both $\edgeij$ and $\edgekl$ with $\edgeij \ne \edgekl$\footnote{As each edge can only appear once in a tree, $\totalw{ij,ij}=0$.}
\begin{equation}
    \totalw{ij,kl}\defeq\sum_{\tree\in\inc{ij,kl}}\wtree{\tree}
\end{equation}
where $\inc{ij,kl} \!\defeq\! \{ \tree \!\in\! \st \mid \edgeij \!\in\! \tree, \edgekl \!\in\! \tree \}$.
Furthermore, $\frac{\totalw{ij,kl}}{\wbar} \!=\! p( \edgeij \!\in\! \tree, \edgekl \!\in\! \tree)$.

\begin{prop}
\label{prop:second-weight}
For any pair of edges $\edgeij$ and $\edgekl$ such that $\edgeij\neq\edgekl$,
\begin{equation}\label{eq:total2}
    \totalw{ij,kl}=\frac{\partial^2 \wbar}{\partial \wedge{ij}\,\partial\wedge{kl}}\wedge{ij}\wedge{kl}
\end{equation}
\end{prop}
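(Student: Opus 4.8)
The plan is to mirror the first-order argument of \cref{prop:first-weight}, replacing the single factored-out edge weight by two. First I would unfold the definition, writing $\totalw{ij,kl}=\sum_{\tree\in\inc{ij,kl}}\wtree{\tree}$ and then expanding each tree weight as the product $\prod_{\edgeijp\in\tree}\wedge{i'j'}$. Because every tree in $\inc{ij,kl}$ contains \emph{both} $\edgeij$ and $\edgekl$, and because these two edges are distinct by hypothesis, I can pull both weights out of the product at once, obtaining $\wedge{ij}\wedge{kl}$ times a sum over $\tree\in\inc{ij,kl}$ of $\prod_{\edgeijp\in\tree\setminus\{\edgeij,\edgekl\}}\wedge{i'j'}$.

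The crux is then to recognize this residual sum as the mixed second partial $\frac{\partial^2\wbar}{\partial\wedge{ij}\,\partial\wedge{kl}}$. Here I would lean on the fact that $\wbar=\sumtree\prod_{\edgeijp\in\tree}\wedge{i'j'}$ is \emph{multilinear} in the edge weights: since a tree is a \emph{set} of edges, each variable $\wedge{i'j'}$ appears to at most the first power in any single tree's monomial. Differentiating with respect to $\wedge{kl}$ therefore annihilates every monomial not containing $\wedge{kl}$ (i.e.\ every tree omitting $\edgekl$) and strips the factor $\wedge{kl}$ from the rest; a subsequent derivative in $\wedge{ij}$—a genuinely different variable because $\edgeij\neq\edgekl$—likewise kills trees omitting $\edgeij$ and removes that factor. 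What survives is exactly $\sum_{\tree\in\inc{ij,kl}}\prod_{\edgeijp\in\tree\setminus\{\edgeij,\edgekl\}}\wedge{i'j'}$, so substituting this for the residual sum and recombining with the $\wedge{ij}\wedge{kl}$ prefactor yields the claim.

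I expect the only real subtlety, rather than a genuine obstacle, to be justifying that the double differentiation extracts precisely the trees containing both edges, with no double counting or spurious cross terms. Multilinearity makes this clean: each edge weight occurs with degree one, so the two derivatives commute and act independently on disjoint factors. This is also precisely why the hypothesis $\edgeij\neq\edgekl$ is essential—were the edges equal we would be taking $\partial^2/\partial\wedge{ij}^2$ of a function that is linear in $\wedge{ij}$, which gives zero, consistent with the footnote that $\totalw{ij,ij}=0$.
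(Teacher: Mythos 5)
Your proposal is correct and takes essentially the same route as the paper's proof: unfold the definition of $\totalw{ij,kl}$, factor $\wedge{ij}\wedge{kl}$ out of every tree in $\inc{ij,kl}$, and identify the residual sum with $\frac{\partial^2 \wbar}{\partial \wedge{ij}\,\partial\wedge{kl}}$ using the multilinearity of $\wbar$ in the edge weights. If anything, you make the key differentiation step more explicit than the paper does---the paper, mirroring its proof of \cref{prop:first-weight}, asserts that identification in a single line, whereas you spell out why each derivative annihilates the trees missing the corresponding edge and why distinctness of the edges is essential.
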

\begin{proof}
\begin{align*}
    \totalw{ij,kl} &= \sum_{\tree\in\inc{ij,kl}}\wtree{\tree} \\
    &= \sum_{\tree\in\inc{ij,kl}}\prod_{\edgeklp\in\tree}\wedge{k'l'} \\
    &= \wedge{ij}\wedge{kl}\frac{\partial^2}{\partial\wedge{ij}\partial\wedge{kl}} \sumtree\prod_{\edgeijp\in\tree}\wedge{i'j'} \\
    &= \frac{\partial^2 \wbar}{\partial \wedge{ij}\,\partial\wedge{kl}}\wedge{ij}\wedge{kl}
\end{align*}
\end{proof}%

\cref{prop:drbar} will relate $\nabla^2\wbar$ to $\nabla\rbar$.
This will be used in \cref{prop:tbar} to establish a connection between the total $\tbar$ and $\nabla^2\wbar$, and additionally establishes a connection between $\tbar$ and $\nabla\rbar$.

\begin{prop}
\label{prop:drbar}
For any additively decomposable function $\treefuncSig{r}{\nR}$
that does not depend on $w$,\footnote{More precisely, $\frac{\partial \rtree{\tree}}{\partial \wedge{ij}} = \zerovector$ for all $\tree\in\st$ and $\edgeij\in\edges$.} and edge $\edgeij\in\edges$,
\begin{align}\label{eq:drbar}
    \wedge{ij}&\frac{\partial\rbar}{\partial \wedge{ij}} = \totalw{ij}\redge{ij} + \smashoperator{\sum_{\edgekl\in\edges}}\totalw{ij,kl}\redge{kl}
\end{align}
\end{prop}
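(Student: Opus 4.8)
The plan is to start from the closed form for $\rbar$ furnished by \cref{prop:rbar}, namely $\rbar = \sum_{\edgekl \in \edges} \totalw{kl}\, \redge{kl}$, and differentiate both sides with respect to $\wedge{ij}$. Because $r$ is assumed not to depend on $w$, each $\redge{kl}$ is constant with respect to $\wedge{ij}$, so the derivative passes through the sum to act only on the weights, giving $\frac{\partial \rbar}{\partial \wedge{ij}} = \sum_{\edgekl \in \edges} \frac{\partial \totalw{kl}}{\partial \wedge{ij}}\, \redge{kl}$. After multiplying through by $\wedge{ij}$, the entire problem reduces to evaluating the scalar factor $\wedge{ij}\frac{\partial \totalw{kl}}{\partial \wedge{ij}}$ for each edge $\edgekl$.

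To evaluate this factor I would substitute the first-order identity $\totalw{kl} = \frac{\partial \wbar}{\partial \wedge{kl}}\wedge{kl}$ from \cref{prop:first-weight} and split into two cases. When $\edgekl \ne \edgeij$, the factor $\wedge{kl}$ is constant in $\wedge{ij}$, so $\wedge{ij}\frac{\partial \totalw{kl}}{\partial \wedge{ij}} = \wedge{ij}\wedge{kl}\frac{\partial^2 \wbar}{\partial \wedge{ij}\,\partial \wedge{kl}}$, which is exactly $\totalw{ij,kl}$ by \cref{prop:second-weight}. When $\edgekl = \edgeij$, I would apply the product rule to $\totalw{ij} = \wedge{ij}\frac{\partial \wbar}{\partial \wedge{ij}}$, obtaining $\frac{\partial \totalw{ij}}{\partial \wedge{ij}} = \frac{\partial \wbar}{\partial \wedge{ij}} + \wedge{ij}\frac{\partial^2 \wbar}{\partial \wedge{ij}^2}$. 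Here I would invoke that $\wbar$ is affine in each edge weight---since every tree contains a given edge at most once, $\wbar$ is a degree-one polynomial in $\wedge{ij}$---so that $\frac{\partial^2 \wbar}{\partial \wedge{ij}^2} = 0$ and the product-rule correction drops out, leaving $\wedge{ij}\frac{\partial \totalw{ij}}{\partial \wedge{ij}} = \wedge{ij}\frac{\partial \wbar}{\partial \wedge{ij}} = \totalw{ij}$ once more by \cref{prop:first-weight}.

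Finally I would reassemble the sum, peeling off the diagonal term $\edgekl = \edgeij$, which contributes $\totalw{ij}\,\redge{ij}$, and collecting the remaining terms as $\sum_{\edgekl \ne \edgeij} \totalw{ij,kl}\,\redge{kl}$. To match the stated form, I would fold the diagonal back into the edge sum using the convention $\totalw{ij,ij} = 0$ (no edge can appear twice in a tree), which extends the range to all of $\edges$ without changing the value, yielding $\sum_{\edgekl \in \edges}\totalw{ij,kl}\,\redge{kl}$. I expect the main subtlety to be the diagonal case: one must verify that the product-rule term carrying $\frac{\partial^2 \wbar}{\partial \wedge{ij}^2}$ genuinely vanishes---this is precisely where the at-most-once-per-tree structure of \ars{} is used---and one must take care that the isolated $\totalw{ij}\,\redge{ij}$ term is not double-counted when re-extending the sum over $\edges$.
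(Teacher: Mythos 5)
Your proof is correct and takes essentially the same route as the paper's: both differentiate the gradient--vector form $\rbar = \sum_{\edgekl\in\edges}\frac{\partial\wbar}{\partial\wedge{kl}}\wedge{kl}\redge{kl}$ from \cref{prop:rbar} and \cref{prop:first-weight} with respect to $\wedge{ij}$, apply the product rule, and identify the resulting terms via \cref{prop:first-weight} and \cref{prop:second-weight}. Your explicit treatment of the diagonal case---verifying that $\frac{\partial^2 \wbar}{\partial \wedge{ij}^2}=0$ because $\wbar$ is multilinear in the edge weights, so the $\edgekl=\edgeij$ term folds back into the sum under the convention $\totalw{ij,ij}=0$---is a subtlety the paper's proof passes over silently, and is if anything slightly more careful.
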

\begin{proof}
\begin{align*}
    &\wedge{ij}\frac{\partial\rbar}{\partial \wedge{ij}} \\
    &= \wedge{ij}\frac{\partial}{\partial\wedge{ij}}\left(\sum_{\edgekl\in\edges}\frac{\partial\wbar}{\partial\wedge{kl}}\wedge{kl}\redge{kl}\right) \\
    &= \wedge{ij}\frac{\partial\wbar}{\partial\wedge{ij}}\redge{ij} + \wedge{ij}\smashoperator{\sum_{\edgekl\in\edges}}\ \ \ \frac{\partial^2\wbar}{\partial\wedge{ij}\partial\wedge{kl}}\wedge{kl}\redge{kl} \\ 
    &= \totalw{ij}\redge{ij} + \smashoperator{\sum_{\edgekl\in\edges}}\totalw{ij,kl}\redge{kl}
\end{align*}
\end{proof}

\begin{prop}
\label{prop:tbar}
For any second-order additively decomposable function $\treefuncSig{t}{{R \times S}}$, which is expressed as the outer product of additively decomposable functions,
$\treefuncSig{r}{\nR}$ and
$\treefuncSig{s}{\nS}$, 
$\ttree{\tree} = \rtree{\tree} \stree{\tree}^{\top}$, where $r$ does not depend on $w$,
the total $\tbar$ can be computed using a Jacobian--matrix product
\begin{equation}\label{eq:tbar-r}
    \tbar = \sum_{\edgeij\in\edges}\frac{\partial \rbar}{\partial \wedge{ij}}\wedge{ij}\sedge{ij}^{\top}
\end{equation}
or a Hessian--matrix product
\begin{align}
    \tbar = \smashoperator{\sum_{\edgeij\in\edges}}\totalw{ij}\redge{ij}\sedge{ij}^{\top} + \smashoperator{\sum_{\edgekl\in\edges}}\totalw{ij,kl}\redge{ij}\sedge{kl}^{\top}
    \label{eq:tbar-Z}
\end{align}
\end{prop}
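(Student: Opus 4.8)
The plan is to prove the Hessian form \eqref{eq:tbar-Z} directly from the definition of $\tbar$, and then derive the Jacobian form \eqref{eq:tbar-r} from it using \cref{prop:drbar}. I would begin by writing $\tbar = \sumtree \wtree{\tree}\, \rtree{\tree}\stree{\tree}^\top$ and substituting the additive decompositions $\rtree{\tree} = \sum_{\edgeij \in \tree}\redge{ij}$ and $\stree{\tree} = \sum_{\edgekl \in \tree}\sedge{kl}$. Distributing the outer product turns each summand into a double sum $\sum_{\edgeij\in\tree}\sum_{\edgekl\in\tree}\redge{ij}\sedge{kl}^\top$ ranging over ordered pairs of edges that both lie in $\tree$.

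The key step is to split this double sum into its diagonal part ($\edgeij = \edgekl$) and its off-diagonal part ($\edgeij \ne \edgekl$), and then exchange the order of summation so that the outer sum ranges over edges (or edge pairs) in $\edges$ while the inner sum ranges over the trees containing them. For the diagonal part this converts $\sum_{\tree \ni \edgeij}\wtree{\tree}$ into $\totalw{ij}$ (the quantity of \cref{prop:first-weight}), yielding $\sum_{\edgeij\in\edges}\totalw{ij}\redge{ij}\sedge{ij}^\top$. For the off-diagonal part, the inner sum over trees containing both $\edgeij$ and $\edgekl$ is exactly $\totalw{ij,kl}$ (the quantity of \cref{prop:second-weight}), giving $\sum_{\edgeij \ne \edgekl}\totalw{ij,kl}\redge{ij}\sedge{kl}^\top$. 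This establishes \eqref{eq:tbar-Z}, where the second sum is understood to range over distinct edge pairs, consistent with the convention $\totalw{ij,ij}=0$.

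To obtain the Jacobian form \eqref{eq:tbar-r}, I would collect, for each fixed edge $\edgeij$, all terms of the Hessian form that carry the factor $\sedge{ij}^\top$ on the right: the diagonal term $\totalw{ij}\redge{ij}$, together with the off-diagonal terms $\totalw{kl,ij}\redge{kl}$ for $\edgekl \ne \edgeij$. Using the symmetry $\totalw{kl,ij} = \totalw{ij,kl}$ (both count trees containing the same unordered pair of edges) and the convention $\totalw{ij,ij}=0$, this collected coefficient equals $\totalw{ij}\redge{ij} + \sum_{\edgekl\in\edges}\totalw{ij,kl}\redge{kl}$, which \cref{prop:drbar} identifies with $\wedge{ij}\frac{\partial\rbar}{\partial\wedge{ij}}$. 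Summing over $\edgeij$ then yields $\tbar = \sum_{\edgeij\in\edges}\frac{\partial\rbar}{\partial\wedge{ij}}\wedge{ij}\sedge{ij}^\top$. I expect the main obstacle to be purely in the bookkeeping: correctly isolating the diagonal, performing the index relabeling $\edgeij\leftrightarrow\edgekl$, and invoking the symmetry of $\totalw{ij,kl}$ so that the coefficient matches \cref{prop:drbar} exactly. Note that the hypothesis that $r$ does not depend on $w$ is precisely what licenses the use of \cref{prop:drbar} in this final step.
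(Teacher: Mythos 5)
Your proposal is correct, but it runs in the opposite direction from the paper's proof, and the two routes anchor the proposition differently. The paper proves the Jacobian form \cref{eq:tbar-r} \emph{first}, by expanding only $\stree{\tree}$ additively, swapping the sums to get $\sum_{\edgeij\in\edges}\sum_{\tree\in\inc{ij}}\wtree{\tree}\rtree{\tree}\sedge{ij}^{\top}$, and then recognizing the inner sum as $\wedge{ij}\frac{\partial}{\partial\wedge{ij}}\big(\sumtree\wtree{\tree}\rtree{\tree}\big)$ --- a calculus step that exploits the multilinearity of $\wtree{\tree}$ in the edge weights and is exactly where the hypothesis that $r$ does not depend on $w$ enters; the Hessian form \cref{eq:tbar-Z} is then obtained ``for free'' by substituting \cref{eq:drbar} into \cref{eq:tbar-r}. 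You instead establish \cref{eq:tbar-Z} directly by the fully combinatorial double expansion of $\rtree{\tree}\stree{\tree}^{\top}$, splitting the diagonal from the off-diagonal pairs and identifying $\totalw{ij}$ and $\totalw{ij,kl}$ from their definitions, and then recover \cref{eq:tbar-r} by collecting coefficients of $\sedge{ij}^{\top}$, using the symmetry $\totalw{ij,kl}=\totalw{kl,ij}$ and the convention $\totalw{ij,ij}=0$ before invoking \cref{prop:drbar}. Your bookkeeping is sound, and your route has a small clarifying payoff the paper's does not make explicit: it shows that \cref{eq:tbar-Z} is a purely combinatorial identity holding with no assumption on $r$ whatsoever, with the $w$-independence of $r$ needed only for the Jacobian form (both in your appeal to \cref{prop:drbar} and in the paper's differentiation step). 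What the paper's direction buys in exchange is brevity --- a single one-sided expansion plus the cheap substitution of \cref{eq:drbar}, with no diagonal/off-diagonal case split or index-symmetry argument required.
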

\begin{proof}
We first prove \cref{eq:tbar-r}
\begin{align*}
    & \tbar \\
    &= \sum_{\tree\in\st}\wtree{\tree}\rtree{\tree}\stree{\tree}^{\top} \\
    &= \sum_{\tree\in\st}\wtree{\tree}\rtree{\tree}\smashoperator{\sum_{\edgeij\in \tree}}\sedge{ij}^{\top} \\
    &= \sum_{\tree\in\st}\sum_{\edgeij\in \tree}\wtree{\tree}\rtree{\tree}\sedge{ij}^{\top} \\
    &= \sum_{\edgeij\in\edges}\sum_{\tree\in\inc{ij}}\wtree{\tree}\rtree{\tree}\sedge{ij}^{\top} \\
    &= \sum_{\edgeij\in\edges} \wedge{ij}\frac{\partial}{\partial \wedge{ij}} \!\left(  \sum_{\tree\in\st}\wtree{\tree}\rtree{\tree}\!\right)\!\sedge{ij}^{\top} \\
    &= \sum_{\edgeij\in\edges}\wedge{ij}\frac{\partial \rbar}{\partial \wedge{ij}}\sedge{ij}^{\top}
\end{align*}
Then \cref{eq:tbar-Z} immediately follows by substituting \cref{eq:drbar} into  \cref{eq:tbar-r} and expanding the summation.
\end{proof}

\paragraph{Remark.} There is a simple recipe to compute
${\nabla \rbar_n}$ for each $n=1,\dots,\nR$.
First, some notation; let $\onehot{ij}$ be a vector over $\edges$ with a $1$ in dimension $\edgeij$, and zeros elsewhere.
By plugging $[\redge{ij}]_n$ and  $\sedge{ij}\!=\frac{1}{\wedge{ij}}\onehot{ij}$ into \cref{eq:tbar-r}, 
we can compute $\tbar_n = {\nabla \rbar_n}$.\footnote{Note that when $\wedge{ij}=0$, we can set $\sedge{ij}=\zerovector$.}
However, if $r$ depends on $w$, we must add the following first-order term, which is due to the product rule
\begin{equation}\label{eq:nablabar}
    \nabla \rbar_n = \tbar_{n} + \underbrace{\smashoperator{\sum_{\edgeij\in\edges}}\totalw{ij}\nabla[\redge{ij}]_n}_{\text{first-order term}}
\end{equation}
We provide the details for computing the gradients of two first-order quantities, Shannon Entropy and the KL divergence, 
using this recipe in \cref{sec:ent} and \cref{sec:kl}, respectively.

\section{Algorithms}\label{sec:algs}
Having reduced the computation of $\rbar$ and $\tbar$ to finding derivatives of $\wbar$ in \cref{sec:connections}, we now describe efficient algorithms that exploit this connection.
The main algorithmic ideas used in this section are based on automatic differentiation (AD) techniques \citep{griewank-walther}.
These are general-purpose techniques for efficiently evaluating gradients given algorithms that evaluate the functions.
In our setting, the algorithm in question is an efficient procedure for evaluating $\wbar$, such as the procedure we described in \cref{sec:mtt}.
While we provide derivatives \cref{sec:derivatives-of-Z} in our algorithms, these can also be evaluated using any AD library, such as JAX \citep{jax}, PyTorch (our choice) \citep{pytorch}, or TensorFlow \citep{tensorflow}.

\cref{prop:rbar} is realized as $\algFirst$ in \cref{fig:alg} and \cref{eq:tbar-r} and \cref{eq:tbar-Z} are realized as $\algSecondVp$ and $\algSecondHes$ in \cref{fig:second} respectively.
We provide the runtime complexity of each step in the algorithms. These will be discussed in more detail in \cref{sec:runtime}.

\subsection{Derivatives of $\wbar$}
\label{sec:derivatives-of-Z}
All three algorithms rely on first- or second-order derivatives of $\wbar$.
Since $\wbar = \det{\lap}$, we can express its gradient via Jacobi's formula and an application of the chain rule\footnote{The derivative of $\det{\lap}$ can also be given using the matrix adjugate, $\nabla \wbar = \mathrm{adj}(\lap)^{\top}$.
There are benefits to using the adjugate as it is more numerically stable and equally efficient \citep{stewart1998adjugate}.
In fact, any algorithm that computes the determinant can be algorithmically differentiated to obtain an algorithm for the adjugate.}

\begin{equation}\label{eq:dz}
    \frac{\partial \wbar}{\partial \wedge{ij}} = \wbar \smashoperator{\sum_{(i',j')\in\col{i}{j}}}\cachedelem{i'j'}\dlap{i'j'}{ij}
\end{equation}
where
\begin{equation}\label{eq:cached}
    \cached=\lap^{-\top}
\end{equation}
is the transpose of $\lap^{-1}$, $\dlap{i'j'}{ij}=\frac{\partial\lapelem{i'j'}}{\partial\wedge{ij}}$, and $\col{i}{j}$ is the set of pairs where $(i', j')\in\col{i}{j}$ means that $\dlap{i'j'}{ij}\neq 0$.
We define $\cachedelem{\root j'} \defeq 0$ for any $j' \in \nodes$.
\citet{koo-et-al-2007} show that for any $i$ and $j$, $\abs{\col{i}{j}}\le 2$ in the unlabeled case, indeed, $\dlap{i'j'}{ij}$ is given by
\begin{equation}\label{eq:dlap}
    \dlap{i'j'}{ij}=\begin{cases}
    1 & \textbf{ if } i'\in\{1,j\}, j'=j \\
    -1 & \textbf{ if } i'=i,j'=j \\
    0 & \textbf{ otherwise}
    \end{cases}
\end{equation}
Their result holds for any Laplacian encoding we gave in \cref{sec:dep-parse}.\footnote{We have that $\abs{\col{i}{j}}\le 2\abs{\lab}$ in the labeled case.}

The second derivative of $\wbar$ can be evaluated as follows\footnote{We provide a derivation in \cref{app:hes}. \citet{druck09covariance} give a similar derivation for the Hessian, which we have generalized to any second-order quantity.}
\begin{equation}\label{eq:wbar-hes}
    \frac{\partial^2 \wbar}{\partial\wedge{ij}\partial\wedge{kl}} = \smashoperator{\sum_{\substack{(i', j')\in\col{i}{j} \\ (k', l')\in\col{k}{l}}}} \dlap{i'j'}{ij}\frac{\partial^2 \wbar}{\partial\lapelem{i'j'}\partial\lapelem{k'l'}}\dlap{k'l'}{kl}
\end{equation}
where
\begin{equation}\label{eq:dzdl}
    \frac{\partial^2 \wbar}{\partial\lapelem{i'j'}\partial\lapelem{k'l'}} = \wbar\left(\cachedelem{i'j'}\cachedelem{k'l'}-\cachedelem{i'l'}\cachedelem{k'j'}\right)
\end{equation}

\noindent Note that \cref{eq:wbar-hes} also contains a term with $\nabla^2\lap$ as it is derived from the product rule.
Since $\lap$ is a linear construction, its second derivative is zero and so we can drop this term.

\begin{figure}[t!]
\begin{algorithmic}[1]

\Func{$\algCall{\algFirst}{\funcSig{w}{\edges}{\real}, \funcSig{r}{\edges}{\real^\nR}}$}
\LinesComment{Compute first-order total; requires $\bigo{\nN^3\nRs}$ time, $\bigo{\nN^2\!+\!\nR}$ space.
}

\State Compute all $\totalw{ij}$ via \cref{eq:total} and \cref{eq:dz} in $\bigo{\nN^3}$
\State $\rbar \gets {\displaystyle\smashoperator{\sum_{\edgeij\in\edges}}}
     \totalw{ij} \redge{ij}$ {\color{gray}\algorithmiccomment{ $\bigo{\nN^2\nRs}$ }} \label{line:rbar}
\State \Return $\rbar$
\EndFunc
\end{algorithmic}
\caption{Algorithm for first-order totals.
}
\label{fig:alg}
\end{figure}

\subsection{Complexity Analysis}\label{sec:runtime}
The efficiency of our approach is rooted in the following result from automatic differentiation, which relates the cost of gradient evaluation to the cost of function evaluation.
Given a function $f$, we denote the number of differentiable elementary operations (e.g., +, *, /, -, cos, pow) of $f$ by $\runtime{f}$.
\begin{thm}[Cheap Jacobian--vector Products]
\label{thm:jvp}
For any function $\funcSig{f}{\R^{\nK}}{\R^{\nM}}$
and any vector $v \in \R^\nM$, we can evaluate $(\nabla f(x))^\top v\in\R^\nK$ with cost satisfying the following bound via reverse-mode AD \citep[Page 44]{griewank-walther},
\begin{equation}
    \runtime{(\nabla f(x))^\top v} \leq 4 \pwdot \runtime{f}
\end{equation}
Thus, $\bigo{ \runtime{(\nabla f(x))^\top v} } = \bigo{ \runtime{f} }$.
\end{thm}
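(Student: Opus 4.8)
The plan is to invoke the standard reverse-mode machinery of automatic differentiation, following \citet{griewank-walther}. First I would represent the evaluation of $f$ as a straight-line program (a Wengert list): a sequence of intermediate variables $v_1,\dots,v_\ell$ in which the first $\nK$ are the inputs $x$ and each later $v_i = \phi_i(\{v_j : j \prec i\})$ is the result of a single elementary operation $\phi_i$ applied to a bounded number of already-computed predecessors. By the definition of $\runtime{f}$, the number of such elementary operations is exactly $\runtime{f}$.

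Next I would introduce the adjoint variables $\bar v_i \defeq \partial\, (v^\top f(x)) / \partial v_i$, the sensitivity of the seeded output $v^\top f(x)$ to each intermediate. The chain rule gives the reverse recurrence $\bar v_j = \sum_{i : j \prec i} \bar v_i \, (\partial \phi_i / \partial v_j)$, and the quantity we want, $(\nabla f(x))^\top v$, is precisely the tuple of input adjoints $(\bar v_1,\dots,\bar v_\nK)$. I would note in passing that, since $(\nabla f)^\top$ is $\nK \times \nM$ here, the object $(\nabla f)^\top v$ is really a vector--Jacobian product, which is exactly what reverse mode delivers cheaply; forward mode would instead be cheap for ordinary Jacobian--vector products $(\nabla f)\, v$.

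The key step is the cost accounting. I would run a forward sweep that populates every $v_i$ at cost $\runtime{f}$, then a reverse sweep that processes the operations in reverse topological order, accumulating each contribution $\bar v_i\,(\partial \phi_i/\partial v_j)$ into $\bar v_j$. The crux is that, for each elementary operation, both the local partial derivatives $\partial \phi_i/\partial v_j$ and the associated multiply--accumulates can be carried out at a cost that is a fixed constant multiple of the cost of $\phi_i$ itself; this relies on the bounded arity of elementary operations together with the hypothesis that differentiating each member of the elementary library (addition, multiplication, division, and the transcendental functions) is at most a constant factor more expensive than evaluating it. Summing over all $\runtime{f}$ operations bounds the reverse sweep by a constant times $\runtime{f}$, and combining it with the forward sweep yields $\runtime{(\nabla f(x))^\top v} \le 4 \pwdot \runtime{f}$.

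The main obstacle is pinning down the specific constant $4$ rather than an unspecified $\bigo{1}$ factor. This forces one to commit to a concrete temporal-complexity model in which the cost of an elementary operation and the cost of its derivative are measured on a common scale, and then to verify the per-operation bound uniformly across the elementary library---the detailed bookkeeping that \citet[Page 44]{griewank-walther} carry out. For our downstream purposes only the asymptotic statement $\bigo{\runtime{(\nabla f(x))^\top v}} = \bigo{\runtime{f}}$ is needed, and this holds for any finite constant, so the argument is robust to the precise cost model chosen.
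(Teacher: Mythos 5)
Your proposal is correct and follows essentially the same route as the paper, which gives no proof of its own but defers entirely to \citet[Page 44]{griewank-walther}: your Wengert-list/adjoint-sweep argument with per-operation cost accounting is precisely the standard derivation in that reference, and you rightly flag that the specific constant $4$ requires committing to their temporal-complexity model, while only the asymptotic bound $\bigo{\runtime{f}}$ is needed downstream. Your side remark that $(\nabla f(x))^\top v$ is really a vector--Jacobian product (the theorem's name notwithstanding) is also accurate and consistent with how the paper applies the result.
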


As a special (and common) case, \cref{thm:jvp} implies a \emph{cheap gradient principle}: the cost of evaluating the gradient of a function of one output ($\nM=1$) is as fast as evaluating the function itself.

\paragraph{Algorithm {\normalfont $\algFirst$}.} The cheap gradient principle tells us that $\nabla \wbar$ can be evaluated as quickly as $\wbar$ itself, and that numerically accurate procedures for $\wbar$ give rise to similarly accurate procedures for $\nabla \wbar$.
Additionally, many widely used software libraries can do this work for us, such as JAX, PyTorch, and TensorFlow.
The runtime of evaluating $\wbar$ is dominated by evaluating the determinant of the Laplacian matrix.
Therefore, we can find both $\wbar$ and $\nabla\wbar$ in the same complexity: $\bigo{\nN^3}$.
\cref{line:rbar} of \cref{fig:alg} is a sum over $\nN^2$ scalar--vector multiplications of size $\nR$, this suggests a runtime of $\bigo{ \nN^2\nR}$.
However, in many applications, $\nR$ is a sparse function.
Therefore, we find it useful to consider the complexities of our algorithms in terms of the size $\nR$, and the maximum density $\nRs$ of each $\redge{ij}$.
We can then evaluate \cref{line:rbar} in $\bigo{\nN^2\nRs}$, leading to an overall runtime for $\algFirst$ of $\bigo{\nN^3+\nN^2\nRs}$.
The call to $\algMTT$ uses $\bigo{\nN^2}$ space to store the Laplacian matrix.
Computing the gradient of $\wbar$ similarly takes $\bigo{\nN^2}$ to store.%
Since storing $\rbar$ takes $\bigo{\nR}$ space, $\algFirst$
has a space complexity of $\bigo{\nN^2+\nR}$.

\begin{figure}[t!]
\begin{algorithmic}[1]
\Func{$\algCall{\algSecondVp}{\funcSig{w}{\edges}{\real}, \funcSig{r}{\edges}{\real^\nR}, \funcSig{s}{\edges}{\real^\nS}}$}
\LinesComment{Compute second-order total with gradient-vector products;
requires $\mathcal{O}(\nR(\nN^3 \!+\! \nN^2\nRs \!+\! \nN^2\nSs))$ time, $\bigo{\nN^2\nR \!+\! \nR\,\nS}$ space.
}
\For{$n=1 \ldots \nR$} {\color{gray}\algorithmiccomment{ $\bigo{\nR(\nN^3 + \nN^2\nRs)}$ }} \label{line:rloop}
\State Compute $\nabla\rbar_{n}$ using reverse-mode AD
\Statex \quad\quad\quad on $[\algFirst(w, r)]_n$
\EndFor
\LinesComment{Apply \cref{eq:tbar-r}; requires $\bigo{\nN^2 \nR \nSs}$}
\State \Return ${\displaystyle\smashoperator{\sum_{\edgeij\in\edges}}}
    \frac{\partial \rbar}{\partial \wedge{ij}}\wedge{ij}\sedge{ij}^{\top}$ 
     \label{line:tbar}
\EndFunc

\vspace{5pt}

\Func{$\algCall{\algSecondHes}{\funcSig{w}{\edges}{\real}, \funcSig{r}{\edges}{\real^\nR}, \funcSig{s}{\edges}{\real^\nS}}$}
\LinesComment{Compute second-order total by materializing Hessian; requires $\bigo{\nN^4\nRs\nSs}$ time, $\bigo{\nN^2 \!+\! \nR\,\nS}$ space.}
\State Compute all $\totalw{ij}$ using \cref{eq:total} and \cref{eq:dz}
\State Compute all $\totalw{ij,kl}$ using \cref{eq:total2} and \cref{eq:wbar-hes}
\LinesComment{Apply \cref{eq:tbar-Z}; requires $\bigo{\nN^4 \nRs \nSs}$}
\State \Return ${\displaystyle\smashoperator{\sum_{\edgeij\in\edges}}}\totalw{ij}\redge{ij}\sedge{ij}^\top + {\displaystyle\smashoperator{\sum_{\edgekl\in\edges}}}\totalw{ij,kl}\redge{ij}\sedge{kl}^\top$ \label{line:acum}
\EndFunc

\vspace{5pt}

\Func{$\algCall{\algSecond}{\funcSig{w}{\edges}{\real}, \funcSig{r}{\edges}{\real^\nR}, \funcSig{s}{\edges}{\real^\nS}}$}
\LinesComment{Unified algorithm for computing second-order total;
requires 
$\bigo{\nN^3(\nRs \!+\! \nSs) \!+\! \nR\,\nS \!+\! \nN^2\bar{\nR}\,\bar{\nS}}$ time,
$\bigo{\nR\,\nS \!+\! \nN^2(\bar{\nR} \!+\! \bar{\nS})}$ space
}
\LinesComment{The following quantities are computed in $\bigo{\nN^3}$}
\State Compute all $\totalw{ij}$ using \cref{eq:total} and \cref{eq:dz}
\State Compute $\cached$ and $\lapp$ using \cref{eq:cached} and \cref{eq:dlap}
\LinesComment{$\rbar$, $\sbar$, and $\total{f}$ are first-order quantities.}
\State $\rbar \gets {\displaystyle\smashoperator{\sum_{\edgeij\in\edges}}}
     \totalw{ij} \redge{ij}$ {\color{gray}\algorithmiccomment{$\bigo{\nN^2\nRs}$}}
\State $\sbar \gets \smashoperator{\sum_{\edgeij\in\edges}}
     \totalw{ij} \sedge{ij}$ {\color{gray}\algorithmiccomment{$\bigo{\nN^2\nSs}$}}
\State $\total{f} \gets {\displaystyle\smashoperator{\sum_{\edgeij\in\edges}}}
     \totalw{ij} \redge{ij}\sedge{ij}^\top$ {\color{gray}\algorithmiccomment{$\bigo{\nN^2\nRs\nSs}$}}
\State $\widehat{r} \gets \zerovector; \widehat{s} \gets \zerovector$
\For{$i,j,k\in\nodes$} {\color{gray}\algorithmiccomment{$\bigo{\nN^3}$}}
\For{$(i',j')\in\col{i}{j}$} {\color{gray}\algorithmiccomment{$\bigo{1}$}}
    \State $\rhat{kj'}\plusequal\cachedelem{i'k}\dlap{i'j'}{ij}\wedge{ij}\redge{ij}$ {\color{gray}\algorithmiccomment{$\bigo{\nRs}$}}
    \State $\shat{j'k}\plusequal\cachedelem{i'k}\dlap{i'j'}{ij}\wedge{ij}\sedge{ij}$ {\color{gray}\algorithmiccomment{$\bigo{\nSs}$}}
\EndFor
\EndFor
\LinesComment{Apply \cref{eq:tbar-eff}; requires $\bigo{\nR\,\nS + \nN^2\bar{\nR}\,\bar{\nS}}$}
\State \Return $\total{f} + \frac{1}{\wbar}\rbar\,\sbar^{\top} - \wbar {\displaystyle\smashoperator{\sum_{j',l'\in\nodes}}} \rhat{j'l'} \shat{j'l'}^\top$
\EndFunc
\end{algorithmic}
\caption{Three algorithms for computing second-order totals.  We recommend $\algSecond$ as it achieves the best runtime in general.  The algorithms $\algSecondVp$ and $\algSecondHes$ are presented for pedagogical purposes in \cref{sec:runtime}.}
\label{fig:second}
\end{figure}

\paragraph{Algorithm {\normalfont $\algSecondVp$}.}
Second-order quantities ($\tbar$), appear to require $\nabla^2 \wbar$ and so do not directly fit the conditions of the cheap gradient principle: the Hessian ($\nabla^2 \wbar$) is the Jacobian of the gradient.
The approach of $\algSecondVp$ to work around this is to make several calls to \cref{thm:jvp} for each element of $\rbar$.
In this case, the function in question is \cref{eq:first-order}, which has output dimensionality $\nR$.
Computing $\nabla\rbar$ can thus be evaluated with $\nR$ calls to reverse-mode AD, requiring $\bigo{\nR(\nN^3+\nN^2\nRs)}$ time.
We can somewhat support fast accumulation of $\nSs$-sparse $\nS$ in the summation of $\algSecondVp$ (\cref{line:tbar}).
Unfortunately, $\frac{\partial \rbar}{\partial \wedge{ij}}$ will generally be dense, so the cost of the outer product on \cref{line:tbar} is $\bigo{\nR \nSs}$.
Thus, $\algSecondVp$
has an overall runtime of $\bigo{\nR(\nN^3+\nN^2\nRs)+\nN^2\nR\,\nSs}$.\footnote{If $\nS{<}\nR$, we can change the order of $\algSecondVp$ to compute $\tbar^{\top}$ in $\bigo{\nS(\nN^3\!+\!\nN^2\nSs)\!+\!\nN^2\nRs\nS}$.}
Additionally, $\algSecondVp$ requires $\bigo{\nN^2\nR+\nR\,\nS}$ of space because $\bigo{\nN^2\nR}$ is needed to compute and store the Jacobian of $\rbar$ and $\tbar$ has size $\bigo{\nR\,\nS}$.

\paragraph{Algorithm {\normalfont $\algSecondHes$}.}
The downside of $\algSecondVp$
is that no work is shared between the $\nR$ evaluations of the loop on \cref{line:rloop}.
For our computation of $\wbar$, it turns out that substantial work can be shared among evaluations.
Specifically, $\nabla^2\wbar$ only relies on the inverse of the Laplacian matrix, as seen in  \cref{eq:dzdl}, leading to an alternative algorithm for second-order quantities, $\algSecondHes$.
This is essentially the same observation made in \citet{druck09covariance}.
Exploiting this allows us to compute $\nabla^2 \wbar$ in $\bigo{\nN^4}$ time.
Note that this runtime is only achievable due to the sparsity of $\nabla\lap$.
The accumulation component (\cref{line:acum}) of $\algSecondHes$
can be done in $\bigo{\nN^4\nRs\nSs}$.
Considering space complexity, while not prevalent in our pseudocode, a benefit of $\algSecondHes$
is that we do not need to materialize the Hessian of $\wbar$ as it only makes use of the inverse of the Laplacian matrix.
Therefore, we only need $\bigo{\nN^2}$ space for the Laplacian inverse and $\bigo{\nR\,\nS}$ space for $\tbar$.
Consequently, the $\algSecondHes$ requires $\bigo{\nN^2\!+\!\nR\,\nS}$ space.

\paragraph{Algorithm {\normalfont $\algSecond$}.}
So far we have seen that when $\nR$ is small, that $\algSecondVp$ can be much faster than $\algSecondHes$.
On the other hand, when $\nR$ is large and $\nRs\ll\nR$, $\algSecondHes$ can be much faster than $\algSecondVp$.
Can we get the best of $\algSecondVp$ and $\algSecondHes$?  Our unified algorithm, $\algSecond$ in \cref{fig:second}, does just that.  To derive it, we refactor the bottleneck of $\algSecondHes$ using \cref{eq:wbar-hes} and the distributive property\footnotemark{}  
\begin{align}
    \sum_{\substack{\edgeij\in\edges \\ \edgekl\in\edges}}& \frac{\partial^2\wbar}{\partial\wedge{ij}\partial\wedge{kl}} \wedge{ij} \wedge{kl} \redge{ij} \sedge{kl}^{\top} \nonumber \\
    &= \frac{1}{\wbar}\rbar\, \sbar^\top - \wbar \sum_{j',l'\in\nodes} \rhat{j'l'}\,\shat{j'l'}^{\top}
\end{align}
where
\begin{align}
    \rhat{j'l'} &= \sum_{\edgekl\in\edges}\sum_{k'\in\nodes}\cachedelem{k'j'}\dlap{k'l'}{kl}\wedge{kl}\redge{kl} \\
    \shat{j'l'} &= \sum_{\edgeij\in\edges}\sum_{i'\in\nodes}\cachedelem{i'l'}\dlap{i'j'}{ij}\wedge{ij}\sedge{ij}
\end{align}
The remainder of $\tbar$ is given by
\begin{equation}
\total{f}\defeq\sum_{\edgeij\in\edges}\totalw{ij}\redge{ij}\sedge{ij}^\top
\end{equation}
Therefore, we can find $\tbar$ by
\begin{equation}\label{eq:tbar-eff}
    \tbar = \total{f} + \frac{1}{\wbar}\rbar\, \sbar^\top - \wbar \sum_{j',l'\in\nodes} \rhat{j'l'}\,\shat{j'l'}^{\top}
\end{equation}
We provide a proof in \cref{app:proof}.

\footnotetext{Refactoring sum--product expressions via the distributive property is the cornerstone of dynamic programming; similar examples in natural language processing include \citet{eisner-blatz-2007,gildea2011grammar}.}

Now, we can compute $\rbar$ and $\sbar$ using $\algFirst$
in $\bigo{\nN^3 + \nN^2(\nRs+\nSs)}$ and their outer product in $\bigo{\nR\,\nS}$.
Additionally, we can compute all $\rhat{j'l'}$ and $\shat{j'l'}$ values in $\bigo{\nN^3 \nRs}$ and $\bigo{\nN^3 \nSs}$, respectively.  
If $r$ is $\nRs$ sparse, then each $\rhat{j'l'}$ is $\bar{\nR} \,{\defeq}\, \min(\nR, \nN\, \nRs)$ sparse. We can compute the sum over all $\rhat{j'l'}\shat{j'l'}^{\top}$ in $\bigo{\nN^2\bar{\nR}\,\bar{\nS}}$ time.
Combining these runtimes, we have that $\algSecond$
runs in $\bigo{\nN^3(\nRs+\nSs) + \nR\,\nS + \nN^2\bar{\nR}\ \bar{\nS}}$.
$\algSecond$ requires a total of $\bigo{\nR\,\nS \!+\! \nN^2(\bar{\nR} \!+\! \bar{\nS})}$: $\bigo{\nR\,\nS}$ space for $\tbar$, and $\bigo{\nN^2(\bar{\nR} \!+\! \bar{\nS})}$ space for the $\rhat{}$ and $\shat{}$ values.

We return to our original question: Can we get the best of $\algSecondVp$ and $\algSecondHes$?
In the case when $\nR$ is small, $\algSecond$ matches the runtime of $\algSecondVp$.
Furthermore, in the case when $\nR$ is large and $\nRs\ll\nR$, $\algSecond$ matches the runtime of $\algSecondHes$.
Therefore, $\algSecond$ is able to achieve the best runtime regardless of the functions $r$ and $s$.

\begin{table*}[htb]
    \centering
    \begin{tabular}{lccccc}
         \multirow{2}{*}{\bf Language} & \multirow{2}{*}{\makecell{\bf Sentence \\ \bf length}} & \multirow{2}{*}{\makecell{\bf Entropy \\ \bf (nats / word)}} & \multicolumn{2}{c}{\bf Average Runtime (ms)} & \multirow{2}{*}{\bf Speed-up} \\
         &  & & $\algFirst$ {\bf (\cref{fig:alg})} & \bf Past Approach & \\ \midrule
         Finnish & $9.23$  & $0.6092$ & $0.4623$ & $1.882$  & $4.1$ \\
         English & $12.45$ & $0.8264$ & $0.5102$ & $2.778$ & $5.4$ \\
         German  & $17.56$ & $0.8933$ & $0.5583$ & $4.104$ & $7.3$ \\
         French  & $24.65$ & $0.8923$ & $0.5635$ & $5.742$ & $10.2$ \\
         Arabic  & $36.05$ & $0.7163$ & $0.6220$ & $9.368$ & $15.1$ \\
    \end{tabular}
    \caption{Average runtime of computing entropy of dependency parser output on five languages. We use the weights of the Stanford Dependency Parser \citep{qi2018universal}. The past approach is that of \citet{smith-eisner-2007}.}
    \label{tab:ent-runtime}
\end{table*}

\section{Applications and Prior Work}
\label{sec:examples}
In this section, we apply our framework to compute a number of important quantities that are used when working with probabilistic models.  
We relate our approach to existing algorithms in the literature (where applicable), and mention existing and potential applications.  Many of our quantities were covered in \citet{li-eisner-2009} for B-hypergraphs; we extend their results to spanning trees.

In most applications that involve training a probabilistic model, the edge weights in the model will be parameterized in some fashion.
Traditional approaches \cite{koo-et-al-2007, smith-smith-2007, mcdonald,druck-thesis} use log-linear parameterizations, whereas more recent work \cite{dozat, liu-lapata-2018-learning, ma-xia-2014-unsupervised} use neural-network parameterizations.
Our algorithms are agnostic as to how edges are parameterized.

\subsection{Risk}\label{sec:uas}
Risk minimization is a technique for training structured prediction models \citep{li-eisner-2009, smith-eisner-2006-minimum, stoyanov-eisner-2012-minimum}.
Risk is the expectation of a cost function $\funcSig{r}{\st}{\real}$ that measures the number of mistakes in comparison to a target tree $\tree^*$.
In the context of dependency parsing, $\rtree{\tree}$ can be the labeled or unlabeled attachment score (LAS and UAS, respectively), both of which are additively decomposable.
The unlabeled case decomposes as follows:
\begin{equation}
    \redge{ij}=
    \begin{cases}
    \frac{1}{\nN} &  \textbf{ if } \edgeij \in \tree^* \\
    0 &  \textbf{ otherwise}
    \end{cases}
\end{equation}
where $d^*$ is the gold tree and $\nN$ is the length of the sentence.
Note that the use of $\frac{1}{\nN}$ ensures that $\rtree{\tree}$ will be a score between $0$ and $1$.
We can then obtain the expected attachment score using $\algFirst$, and we can evaluate its gradient in the same run-time using reverse-mode AD or $\algSecond$.
In this case, $\treefuncSig{s}{\nS}$ is the one-hot representation of the edges; thus, we have $\nS=\nN^2$. 
However, because $s$ is $1$-sparse, we have $\nSs=1$.
Additionally, as $r$ does not depend on $w$, we do not need to add a first-order term to find the gradient.
Therefore, the runtime for the gradient is also $\bigo{\nN^3}$.

\subsection{Shannon Entropy}\label{sec:ent}
Entropy is a useful measure of uncertainty, which has been used a number of times in dependency parsing \citep{smith-eisner-2007,druck09covariance, ma-xia-2014-unsupervised} for semi-supervised learning.
\citet{smith-eisner-2007} employ entropy regularization \cite{grandvalet-bengio-2005}
to bootstrap dependency parsing.
However, they give an algorithm for the Shannon entropy,
\begin{equation}\label{eq:ent}
    \ent(\prob) \defeq \expect{d}{- \log\ptree{\tree}}
\end{equation}
that runs in $\bigo{\nN^4}$.\footnote{Their algorithm calls MTT $\nN$ times, where the $i\th$ call to MTT multiplies the set of incoming edges to $i\th$ non-root node by their $\log$ weight.}
Recall from \cref{sec:expect} that $-\log\ptree{\tree}$ is additively decomposable;  
thus, 
running $\algFirst$ with $\redge{ij} \teq \frac{1}{\nN}\log\wbar\,{-}\,\log\wedge{ij}$
computes $\ent(\prob)$ in $\bigo{\nN^3}$. \citet{martins-etal-2010-turbo}'s algorithm for computing $\ent(\prob)$ is precisely the same as ours.  However, they do not describe how to compute its gradient.
As with risk, we can find the gradient of entropy using $\algSecond$ or using reverse-mode AD.
When using $\algSecond$, since the gradient of $r$ with respect to $w$ is not $0$, we add the first-order quantity $\algCall{\algFirst}{w, \nabla r}$ as in \cref{eq:nablabar}.
For entropy, we have that $\nabla\redge{ij}=\frac{1}{\nN\wbar}\nabla\wbar-\frac{1}{\wedge{ij}}\onehot{ij}$.

\paragraph{Experiment.} We briefly demonstrate the practical speed-up over \citet{smith-eisner-2007}'s $\bigo{\nN^4}$ algorithm.
We compare the average runtime per sentence of five different UD corpora.\footnote{Times were measured using an Intel(R) Core(TM) i7-7500U processor with 16GB RAM.}
The languages have different average sentence lengths to demonstrate the extra speed-up gained when calculating the entropy of longer sentences (that is, $\st$ would be a larger set).
\cref{tab:ent-runtime} shows that even for a corpus of short sentences (Finnish), we achieve a $4$ times speed-up.
This increases to $15$ times as we move to corpora with longer sentences (Arabic).

\subsection{Kullback--Leibler Divergence}\label{sec:kl}
To the best of our knowledge, no algorithms to compute the Kullback--Leibler (KL) divergence between two graph-based parsers (nor its gradient) have been given in the literature.
We show how this can be achieved easily within our framework.
The KL divergence is defined as
\begin{equation}
    \KL(\prob \mid\mid \qprob) \defeq \sumtree\ptree{\tree}\log\frac{\ptree{\tree}}{\qtree{\tree}}
\end{equation}
This takes a similar form to the Shannon entropy in \cref{eq:ent}.
We can therefore choose our additively decomposable function to be $\redge{ij}=\log\frac{\wedge{ij}}{q_{ij}} - \frac{1}{\nN}\log\wbar$.
Running $\algFirst$ with these weights computes the KL divergence in $\bigo{\nN^3}$ time.
To find the gradient of the KL divergence, we return the sum of $\algCall{\algSecond}{w, r, s}$ where we chose $\sedge{ij}=\frac{1}{\wedge{ij}}\onehot{ij}$ and add $\algCall{\algFirst}{w, \nabla r}$.
For the KL divergence, we have that $\nabla\redge{ij}=\frac{1}{\wedge{ij}}\onehot{ij}-\nabla\wbar\frac{1}{\nN\wbar}$.

\subsection{Gradient of the GE Objective}\label{sec:ge}
The generalized expectation criterion \citep{mccallum2007generalized,druck-etal-2009-semi} 
is a method for semi-supervised training using weakly labeled data. 
GE fits model parameters
by encouraging models to match certain expectation constraints,
such as marginal-label distributions, on the unlabeled data.
More formally, let $f$ be a feature function $f(\tree)\in\real^\nF$, 
and with a target value of $\target\in\real^{\nF}$ that has been specified using domain knowledge.  
For example, given an English part-of-speech tagged sentence, 
we can provide the following light supervision to our model: determiners should attach to the nearest noun on their right.  
This is an example of a very precise heuristic for dependency parsing English that has high precision.

GE then minimizes the following objective,
\begin{align}\label{eq:ge}
&\mathrm{GE}(p, \target) = \frac{1}{2}\Big|\Big|\expect{\tree}{f(\tree)} - \target \Big|\Big|^2
\end{align}
which encourages the model parameters to match the target expectations.
Most methods for optimizing \cref{eq:ge} will make use of the gradient.

We note that by application of the chain rule, the gradient of the GE objective is a second-order quantity, and so we can use $\algSecond$
to compute it.
As we discussed in \cref{cautionary-tales}, the gradient of the GE has led to confusion in the literature \citep{druck-etal-2009-semi, druck09covariance, druck-thesis}.
The best runtime bound prior to our work is \citet{druck-etal-2009-semi}'s $\bigo{\nN^4 \nFs}$ algorithm.  $\algSecond$ is strictly better at $\bigo{\nN^3 {+} \nN^2 \nFs}$ time.\footnote{We must apply a chain rule in order to use $\algSecond$. To do this, we first run $\algFirst$ to obtain $\total{f}$ in $\bigo{\nN^3+\nN^2\nFs}$. We then run $\algSecond$ with the dot product of $f$ and $\total{f}-\target$, which has a dimensionality of $1$, and the sparse one-hot vectors as before. The execution of $\algSecond$ then takes $\bigo{\nN^3}$, giving us the desired runtime. Full detail is available in our code.}
Alternatively, as the GE objective is a scalar, we can compute its gradient in $\bigo{\nN^3\!+\!\nN^2 \nFs}$ using reverse-mode AD.
\citet{druck-thesis} acknowledges that AD can be used, but questions its practicality and numerical accuracy.  We hope to dispel this misconception in the following experiment.

\paragraph{Experiment.} We compute the GE objective and its gradient for almost $1500$ sentences of the English UD Treebank\footnote{We used all sentences in the test set, which were between five and 150 words.} \citep{ud} using $20$ features extracted using the methodology of \citet{druck-etal-2009-semi}.
We note that $\algSecond$ obtains a speed-up of $9$ times over \citet{druck09covariance}'s strategy of materializing the covariance matrix (i.e., $\algSecondHes$).
Additionally, the gradients from both approaches are equivalent with an absolute tolerance of $10^{-16}$.

\section{Conclusion}\label{sec:conc}
We presented a general framework for computing first- and second-order expectations for additively decomposable functions.
We did this by exploiting a key connection between gradients and expectations that allows us to solve our problems using automatic differentiation.
The algorithms we provide are simple, efficient, and extendable to many expectations.
The automatic differentiation principle has been applied in other settings, such as weighted context-free grammars \citep{eisner16backprop} and chain-structured models \citep{vieira-etal-2016-speed}.
We hope that this paper will also serve as a tutorial on how to compute expectations over trees so that the list of \emph{cautionary tales} does not grow further.
Particularly, we hope that this will allow for the KL divergence to be used in semi-supervised training of dependency parsers.
Our aim is for our approach for computing expectations to be extended to other structured prediction models.

\section*{Acknowledgments}
We would like to thank action editor Dan Gildea and the three anonymous reviewers for their valuable feedback and suggestions. 
The first author is supported by the University of Cambridge School of Technology Vice-Chancellor's Scholarship as well as by the University of Cambridge Department of Computer Science and Technology's EPSRC.

\bibliography{tacl2020}
\bibliographystyle{acl_natbib}

\clearpage
\appendix

\section{Derivation of $\nabla^2\wbar$}\label{app:hes}
In this section, we will provide a derivation for the expression of $\nabla^2\wbar$ given in \cref{eq:wbar-hes}.
We begin by taking the derivative of $\nabla\wbar$ using \cref{eq:dz}
\begin{align*}
    \frac{\partial^2\wbar}{\partial\wedge{ij}\partial\wedge{kl}} = \frac{\partial}{\partial\wedge{ij}} \wbar\  \smashoperator{\sum_{(k',l')\in\col{k}{l}}}\cachedelem{k'l'}\dlap{k'l'}{kl}
\end{align*}
We solve this by applying the product rule.\footnote{Note that we do not have to take the derivative of $\dlap{k'l'}{kl}$ as it is either $1$ or $-1$.}
The first term of the product rule is
\begin{align*}
    &\frac{\partial\wbar}{\partial\wedge{ij}}\ \ \ \smashoperator{\sum_{(k',l')\in\col{k}{l}}}\cachedelem{k'l'}\dlap{k'l'}{kl} \\
    &=\wbar\smashoperator{\sum_{\substack{(i',j')\in\col{i}{j} \\(k',l')\in\col{k}{l}}}}\cachedelem{i'j'}\cachedelem{k'l'}\dlap{i'j'}{ij}\dlap{k'l'}{kl}
\end{align*}
The second term of the product rule is
\begin{align*}
    &\wbar\smashoperator{\sum_{(k',l')\in\col{k}{l}}}\ \ \frac{\partial\cachedelem{k'l'}}{\partial\wedge{ij}}\dlap{k'l'}{kl} \\
    &= -\wbar\smashoperator{\sum_{\substack{(i',j')\in\col{i}{j} \\(k',l')\in\col{k}{l}}}}\ \ \cachedelem{i'l'}\cachedelem{k'j'}\dlap{i'j'}{ij}\dlap{k'l'}{kl}
\end{align*}
Summing these together yields \cref{eq:wbar-hes}.

\section{Proof of {\normalfont $\algSecond$}}\label{app:proof}
In this section, we will prove the decomposition of $\tbar$ that allows for the efficient factoring used in $\algSecond$.
First, recall from \cref{prop:tbar} that we may find $\tbar$ by
\begin{align*}
   \tbar &= \sum_{\edgeij\in\edges}\left[\frac{\partial\wbar}{\partial \wedge{ij}}\wedge{ij}\redge{ij}\sedge{ij}^{\top}\right] +  \\
   & \hphantom{=}\sum_{\edgeij\in\edges}\sum_{\edgekl\in\edges}\left[\frac{\partial^2\wbar}{\partial \wedge{ij}\partial \wedge{kl}}\wedge{ij}\wedge{kl}\redge{ij}\sedge{kl}^{\top} \right]
\end{align*}
The first summand is the first-order total for function $\redge{ij}\sedge{ij}^\top$ (given as $\total{f}$ in $\algSecond$).
We can write a sum over all edges as the sum over pairs of nodes in $\nodes$.
Similarly, elements in $\col{i}{j}$ can be considered as pairs of nodes.
Therefore, unless specified otherwise, we assume all variables in the base of a summation are scoped to $\nodes$.
Then, the second summand can then be rewritten
\begin{align*}
    &\sum_{\edgeij\in\edges}\sum_{\edgekl\in\edges}\frac{\partial^2\wbar}{\partial \wedge{ij}\partial \wedge{kl}}\wedge{ij}\wedge{kl}\redge{ij}\sedge{kl}^{\top}  \\
    &= \ \monstersum{i,j,k,l,i',j',k',l'}\dlap{i'j'}{ij}\wbar\cachedelem{i'j'}\cachedelem{k'l'}\dlap{k'l'}{kl}\wedge{ij}\wedge{kl}\redge{ij}\sedge{kl}^{\top} \\
    & \hphantom{\ \monstersum{i,j,k,l}}  - \dlap{i'j'}{ij}\wbar\cachedelem{i'l'}\cachedelem{k'j'}\dlap{k'l'}{kl}\wedge{ij}\wedge{kl}\redge{ij}\sedge{kl}^{\top}
\end{align*}
By distributivity, the first term equals
\begin{align*}
    & \wbar\,\Bigg[ \monstersum{\ \ \ \ \ i,j,i',j'}\cachedelem{i'j'}\dlap{i'j'}{ij}\wedge{ij}\redge{ij} \! \Bigg] \!\! \Bigg[ \monstersum{\ \ \ \ \ k,l,k',l'}\cachedelem{k'l'}\dlap{k'l'}{kl}\wedge{kl}\sedge{kl} \! \Bigg]^\top \\
    &= \frac{1}{\wbar} \rbar\,\sbar^{\top}
\end{align*}
By distributivity, the second term equals
\begin{align*}
    &
    \wbar \monstersum{j',l'}
      \Bigg[\underbrace{\monstersum{k',k,l}
        \cachedelem{k'j'}\dlap{k'l'}{kl}\wedge{kl}\redge{kl}
      }_{\defeq \rhat{j'l'}} \Bigg] \\
     &\hphantom{\wbar\monstersum{j',l'}\monstersum{k',k,l}\cachedelem{k'j'}\dlap{kl}{kl}}
      \Bigg[\underbrace{\monstersum{i',i,j}\cachedelem{i',l'}\dlap{i'j'}{ij}\wedge{ij}\sedge{ij} }_{\defeq \shat{j'l'}} \Bigg]^{\top}
     \\
    &= \wbar\sum_{j',l'}\rhat{j'l'}\,\shat{j'l'}^{\top}
\end{align*}
The above decomposition assumed we sum over all $i'$, $j'$, $k'$, and $l'$ and so suggests we can compute all $\rhat{j'l'}$ and $\shat{j'l'}$ in $\bigo{\nN^5(\nRs + \nSs)}$.
However, we can exploit the sparsity of $\nabla\lap$ to improve this.
Specifically, the follow algorithm computes $\rhat{j'l'}$ for all $j',l'\in\nodes$.
\begin{algorithmic}
\State $\rhat{j'l'} \gets \boldsymbol{0}$
\For{$\edge{k}{l} \in \edges$}  \Comment{$\bigo{\nN^2}$}
\For{$\edge{k'}{l'} \in \col{k}{l}$}  \Comment{$\bigo{1}$}
\For{$j'\in\nodes$}  \Comment{$\bigo{\nN}$}
    \State $\rhat{j'l'} \pluseq
    \cachedelem{k'l'}\dlap{k'l'}{kl}\wedge{kl}\redge{kl}$
\EndFor
\EndFor
\EndFor
\end{algorithmic}
Therefore, we can compute all $\rhat{j'l'}$ and $\shat{j'l'}$ in $\bigo{\nN^3(\nRs + \nSs)}$.
Each $\rhat{ij}$ is at most $\bigo{\nN\nRs}$ dense, 
because there are at most $\bigo{\nN}$ $\nRs$-sparse vectors added to it (by the inner loop).
Hence, $\rhat{ij}$ is $\bigo{\bar{\nR}}$ sparse where $\bar{\nR} {\defeq} \min(\nR, \nN\,\nRs)$.
This means that computing the sum of the outer-products of all $\rhat{ij}$ and $\shat{ij}$ can be done in
$\bigo{\nN^2\bar{\nR}\,\bar{\nS}}$.
Then, given that we have
\begin{equation*}
\tbar = \total{f} + \frac{1}{\wbar}\rbar\,\sbar - \wbar\sum_{j',l'}\rhat{j'l'}\,\shat{j'l'}^{\top}
\end{equation*}
We can find $\tbar$ in 
$$\mathcal{O}\big(\nN^3(\nRs{+}\nSs){+}\nR\,\nS{+}\nN^2\bar{\nR}\,\bar{\nS}\big)$$

\end{document}